%% file: main.tex
\documentclass[a4paper,fleqn]{cas-dc}
\usepackage[authoryear]{natbib}
\usepackage{amsmath,amssymb,amsfonts}
\usepackage{amsthm}
\usepackage{graphicx}
\usepackage{booktabs}
\usepackage{xspace}
\usepackage{arydshln}
\usepackage{subcaption}
\usepackage{tikz}
\usetikzlibrary{calc}
\usetikzlibrary{
	arrows.meta,
	decorations.pathreplacing,
	calc,
	positioning,
	shapes.arrows
}

\newtheorem{remark}{Remark}
\newtheorem{lemma}{Lemma}
\newtheorem{assumption}{Assumption}

\newcommand{\R}{\mathbb{R}}
\newcommand{\B}{\mathcal{B}}

\newcommand{\D}{\mathcal{D}}

\newcommand{\A}{\mathcal{A}}

\begin{document}
	\let\WriteBookmarks\relax
	\def\floatpagepagefraction{1}
	\def\textpagefraction{.001}
	
	\shorttitle{CS3F: Training-free 3D anomaly localization}
	\shortauthors{Le Gia}
	
	\title[mode=title]{Toward Training-Free Zero-Shot Anomaly Detection in 3D Medical Images: A Batch-Based Approach Using 2D Foundation Models}
	
	\author[1]{Tai {Le Gia}}[orcid=0009-0009-0547-9317]
	\cormark[1]
	\ead{giataile@o.cnu.ac.kr}
	\credit{Conceptualization, Methodology, Software, Validation, Formal analysis, Investigation, Data curation, Visualization, Writing -- original draft, Writing -- review \& editing}
	
	\affiliation[1]{
		organization={Department of Mathematics, Chungnam National University},
		city={Daejeon},
		country={Republic of Korea}
	}
	
	\cortext[cor1]{Corresponding author.}
	
\begin{abstract}
	Zero-shot anomaly detection (ZSAD) is attractive for medical imaging because clinical systems must handle heterogeneous acquisition protocols, changing patient populations, and pathologies for which annotated training data may be unavailable. Most existing zero-shot anomaly detection methods are designed for 2D images, and their direct extension to 3D medical volumes is limited by the scarcity of large-scale volumetric foundation models or by the difficulty of utilizing volumetric context. We propose CS3F, a training-free batch-based framework for ZSAD in 3D medical images using 2D foundation models. Each volume is decomposed along multiple anatomical axes and encoded slice-wise by a 2D vision transformer. These are then converted into localized volumetric tokens by pooling neighboring slice features.  Anomaly scores are obtained from cross-subject mutual similarity: tokens that lack close analogues in other subjects are assigned higher anomaly scores. To reduce the attenuation of focal lesion signals caused by depth pooling, we introduce a coarse-to-fine tokenization strategy that enables fine-resolution volumetric scoring without exhaustive matching. CS3F is evaluated on brain MRI across metastases, glioma, and stroke, as well as validated on lung CT to test generalizability beyond atlas-aligned brain MRI. The results show that frozen 2D foundation models can support anomaly localization in 3D medical images, and that the benefit of fine tokenization depends strongly on lesion contrast and imaging modality.
\end{abstract}
	
		\begin{keywords}
		zero-shot anomaly detection \sep 3D medical imaging \sep foundation models \sep brain MRI \sep Lung CT \sep coarse-to-fine tokenization \sep training-free learning
	\end{keywords}

	\maketitle
	
	\section{Introduction}
Anomaly segmentation plays an important role in medical image analysis, as accurate localization of abnormal regions supports early disease detection, diagnosis, treatment planning, automated screening, and scan quality control. Conventional supervised segmentation methods rely on manually annotated abnormal images and therefore are limited by the availability and diversity of pathological annotations~\citep{kamnitsas2017efficient, Isensee2021, yeung2022unified}. As a result, they may struggle when deployed on abnormalities that differ from those observed during training. To reduce the dependence on anomalous annotations, unsupervised anomaly detection (UAD) methods have been widely explored. These methods learn a normal distribution from normal training data and detect deviations during inference, making them attractive for identifying unseen abnormalities, incidental findings~\citep{Rowley417}, and scan-quality defects~\citep{hendriks2024systematic}. However, standard UAD still requires a sufficiently clean and representative normal training cohort. In medical imaging, such cohorts can be costly to curate, and the training distribution may not match the operating distribution due to differences in acquisition protocols or populations.
	
	These limitations motivate the zero-shot anomaly detection (ZSAD) setting, where anomaly localization is performed without adaptation to a target domain. Recent advances in large-scale pretrained vision models~\citep{CLIP, liu2021swin, DINOv2, dinov3} have enabled promising ZSAD methods for 2D images, including medical images, by exploiting generic feature representations extracted from foundation models. Existing approaches for 2D images can be broadly divided into two categories: text-based methods and batch-based methods. Text-based approaches \citep{APRIL-GAN, WinCLIP, AnomalyCLIP} leverage vision--language models such as CLIP~\citep{CLIP} to estimate abnormality using textual prompts, but often require prompt engineering, adaptation, or additional optimization to achieve stable performance. In contrast, batch-based methods \citep{MuSc, CoDeGraph} operate directly on visual tokens extracted from vision transformers and exploit the intrinsic feature geometry across a batch of subjects (e.g., images). These methods are motivated by a simple statistical observation: normal patches consistently find similar counterparts across images, whereas anomalous patches are comparatively rare and therefore yield higher cross-subject distances. By performing cross-subject mutual similarity searches between local features, batch-based approaches identify anomalous regions without prompts or supervision. 
	
In the 3D setting, the batch-based paradigm translates naturally: a batch of subject volumes can be processed to produce local volumetric tokens for each subject, and anomaly scores can then be obtained by applying cross-subject scoring to these tokens. However, directly realizing this idea remains challenging. Large-scale 3D foundation models for medical imaging remain less mature than their 2D counterparts, making 2D foundation models a practical starting point for deriving volumetric representations. Therefore, the key challenge is not the cross-subject scoring mechanism itself, but how to construct volumetric tokens from 3D medical images using readily available 2D foundation models in a way that preserves 3D contextual information, remains sensitive to focal lesions, and is computationally tractable.
	
	In this work, we propose \textbf{C}ross-\textbf{S}ubject Anomaly Scoring for \textbf{3}D volumes via \textbf{F}oundation models (CS3F), an efficient, training-free framework that adapts the batch-based paradigm to 3D medical volumes by utilizing frozen 2D foundation models. CS3F bridges the gap between 2D models and 3D volumes by encoding 2D slices with a vision transformer and aggregating neighboring slice features into localized volumetric tokens that capture depth-wise anatomical context. To recover the complementary 3D information unavailable in a single orientation, the token construction and scoring pipeline is applied independently across the axial, coronal, and sagittal planes, with the resulting anomaly scores fused for the final volumetric prediction.
	
	A primary challenge in scaling this scoring pipeline to the 3D domain is the quadratic computational complexity inherent in performing cross-subject mutual similarity searches over a massive number of volumetric tokens (often in the millions). To make this computation tractable, we project these volumetric token features into a lower-dimensional space via random projection, which approximately preserves local neighborhood geometry while significantly reducing memory overhead and computational costs. Moreover, fixed-scale depth aggregation introduces another limitation: lesion features can become attenuated when focal lesions occupy only a small fraction of the volumetric aggregation region. To address this limitation, we further introduce a coarse-to-fine (C2F) tokenization strategy that extends scoring to finer depth resolutions while avoiding the prohibitive cost of mutual similarity search. This improves sensitivity to focal lesions while keeping the computational burden manageable. Ultimately, CS3F requires no target-domain supervision, no curated normal training data, and no auxiliary annotated anomaly data.
	
This paper builds upon our previous work~\citep{gia2026trainingfree}, where we introduced batch-based ZSAD for 3D brain MRI. This paper presents a substantially redesigned and expanded framework with axis-wise score fusion, coarse-to-fine multi-scale tokenization, lesion-wise fixed-specificity analysis, and cross-organ validation on lung CT. 

	Our main contributions are:
	
	\begin{itemize}
		\item We propose CS3F, a training-free batch-based framework for zero-shot anomaly detection and segmentation in 3D medical images. CS3F extracts volumetric representations utilizing frozen 2D foundation models and identifies anomalous regions through cross-subject mutual similarity without lesion annotations, curated normal training data, or pathology-specific adaptation.
		
		\item We replace the feature-level multi-axis fusion strategy of our preliminary work~\citep{gia2026trainingfree} with an axis-wise score fusion pipeline. This design substantially reduces memory usage, preserves axis-specific spatial evidence, and supports volumetric inputs with different organ geometries.
		
		\item We introduce coarse-to-fine multi-scale tokenization, motivated
		by a formal analysis of pooling-induced lesion attenuation. Coarse
		scoring provides robust global matching, while fine scoring is
		restricted to spatially relevant regions to improve focal-lesion
		localization with tractable memory and runtime.
		
	\item We evaluate CS3F across brain MRI and lung CT, covering
	metastases, glioma, stroke, and lung cancer. The experiments show that CS3F
	consistently outperforms CLIP-based zero-shot baselines across all
	benchmarks and is competitive with reconstruction-based UAD baselines.
		
	\end{itemize}

\section{Related Work}
\label{sec:related-work}
\paragraph{Unsupervised Anomaly Detection in 3D Medical Imaging.}

Unsupervised anomaly detection (UAD) in 3D medical imaging is dominated by reconstruction-based methods, especially in brain MRI. These methods learn the distribution of normality by reconstructing corrupted or compressed inputs and use reconstruction errors as evidence of anomalies. Existing approaches include autoencoder-based methods \citep{atlason2019unsupervised,baur2021autoencoders,cai2024rethinking}, GAN-based methods \citep{schlegl2019f,KuangLung2020}, and more recent diffusion-based methods \citep{pinaya2022fast,behrendt2024patched,frotscher2025unsupervised}. Although reconstruction-based UAD can achieve strong performance in controlled settings, it remains sensitive to domain gaps between training and testing data caused by scanner hardware, acquisition protocols, and population differences \citep{LIANG2026103763}. Consequently, elevated reconstruction errors may arise from artifacts or distribution mismatches rather than genuine pathology. 

Beyond reconstruction, several works learn patch-level representations of normal anatomy. Student--teacher feature matching has been adapted to 3D MRI, where anomalies are flagged by discrepancies between teacher and student feature responses \citep{schwarz2024patch}. In lung CT, self-supervised learning has been used to learn representations of 3D lung patches for lesion localization \citep{almeida2023coopd,LEE2025103559}. \cite{kim20243d} uses pretrained models to construct a PatchCore-style memory bank \citep{patchcore} from multi-view maximum-intensity projections of segmented lungs. These methods demonstrate the value of patch-level, multi-view, and distance-based feature comparison for volumetric UAD, but they still require either training or a normal reference memory bank.

\paragraph{Vision--Language Models for Zero-Shot Anomaly Detection.}

Zero-shot anomaly detection (ZSAD) aims to eliminate the need for target-domain training data. Vision--language models such as CLIP~\citep{CLIP} have inspired methods that compare visual features against textual prompts describing normal and abnormal states. WinCLIP~\citep{WinCLIP} introduces window-based inference with prompt ensembles for ZSAD. APRIL-GAN~\citep{APRIL-GAN} adapts CLIP by projecting visual features into the joint image--text space after fine-tuning on auxiliary annotated datasets, while AnomalyCLIP~\citep{AnomalyCLIP} learns object-agnostic normal and abnormal prompts to emphasize generic abnormality cues over category semantics. Although these methods remove the need for target-domain training, they are not ideally suited to 3D medical anomaly localization~\citep{CLIP-based-Marzullo}. This limitation arises because CLIP's vision--language alignment is pretrained primarily on natural image--caption pairs, leading to misalignment between the vision and text modalities in the medical domain. Moreover, applying these models in a slice-wise manner to 3D volumes discards through-plane contextual information and frequently yields inconsistent anomaly maps across adjacent slices.

\paragraph{Batch-Based Approaches for Zero-Shot Anomaly Detection.}

Batch-based ZSAD~\citep{ACR, MuSc, CoDeGraph} derives anomaly scores from the statistical structure of an unlabeled test batch rather than from language. MuSc~\citep{MuSc} observes that normal patches consistently find close analogues across other images in the batch, whereas anomalous patches are rarer and exhibit larger nearest-neighbor distances. CoDeGraph~\citep{CoDeGraph} analyzes a key failure mode of this paradigm: when consistent anomalies recur across the batch, they may no longer appear isolated in feature space and can therefore receive erroneously low anomaly scores.

 These methods were primarily developed for 2D industrial anomaly detection. Extending them to 3D medical imaging faces three key obstacles. First, there is no general-purpose 3D foundation model comparable in maturity to 2D models such as DINOv2 or CLIP. Second, volumetric images generate orders of magnitude more tokens than 2D images, rendering quadratic mutual-similarity matching computationally prohibitive. Third, medical abnormalities typically require volumetric context rather than independent slice-wise evidence. CS3F addresses these challenges by constructing localized volumetric tokens from multi-axis 2D foundation-model features and performing efficient cross-subject anomaly scoring tailored to 3D medical images.
\paragraph{Foundation Models for 3D Medical Image Representation.}
Large-scale native 3D medical foundation models have recently emerged for volumetric segmentation, recognition, and disease detection. SuPreM~\citep{li2024abdomenatlas} leverages 673K anatomical masks derived from 5K CT volumes during pre-training. In the head CT domain, FM-CT~\citep{zhu20263d} pretrains on 361,663 non-contrast 3D scans to achieve generalizable disease detection. Other volumetric and promptable foundation models similarly underscore the value of large-scale 3D pre-training~\citep{wang2025sam,blankemeier2024merlin}. Nevertheless, developing these models demands massive volumetric datasets and computational resources.

A complementary and resource-efficient direction reuses strong 2D foundation models for 3D data. RAPTOR~\citep{an2025raptor} constructs training-free volume-level embeddings by aggregating multiplanar slice features extracted from pretrained 2D vision transformers. VoxCor \citep{tombak2026voxcor} similarly builds volumetric representations from frozen 2D models for voxel-wise correspondence tasks. These works establish that 2D foundation models can be lifted to 3D without volumetric pretraining. CS3F adopts this 2D-to-3D philosophy but targets anomaly detection and segmentation.

	\section{Method}
	\label{sec:method}
	\input{overall_pipeline}

	\subsection{Problem Formulation and Framework Overview}
	\label{sec:overview}
	Let $\B=\{V_1,\ldots,V_B\}$ denote a batch of preprocessed 3D medical volumes available at test time, where each volume
	$V_i\in\R^{H\times W\times D}$ depicts the same anatomical structure or organ class, such as brains or lungs. The batch may contain normal, abnormal, or a mixture of both. The goal of zero-shot anomaly detection is to assign a voxel-level anomaly score
	\(
	a_i(x,y,z)\in\R_{\geq 0}
	\)
	to each spatial location, using only the available test-time batch $\B$. Our proposed framework consists of two conceptual components. First, each 3D volume is represented by a fixed set of tokens,
	\[
	C_i=\left\lbrace \mathbf{v}_i^1,\ldots,\mathbf{v}_i^N\right\rbrace\subset\R^d,
	\]
	where each token $\mathbf{v}_i^k$ represents a spatially localized 3D patch. Second, each token is assigned an anomaly score by comparing it with tokens from the other volumes in the batch.
	This design rests on the following \emph{Doppelg\"{a}nger Assumption}.
	
	\begin{assumption}[Doppelg\"{a}nger Assumption]
		\label{assump:doppelganger}
		Within a batch of volumes from the same anatomical domain, normal anatomical structures recur across subjects with relatively consistent appearance and spatial organization. As a result, normal tokens tend to have close cross-subject analogues. Pathological structures, in contrast, are not only less frequent but also more heterogeneous in size, morphology, intensity pattern, and anatomical location. Consequently, pathological tokens are less likely to find close visual analogues in other subjects, even when related pathologies are present elsewhere in the batch.\footnote{This assumption can fail when the same abnormal pattern recurs consistently across many subjects with similar appearance and location, allowing anomalous tokens to find close cross-subject analogues and receive low anomaly scores. This failure mode, termed \emph{consistent anomalies}, and a graph-based mitigation strategy are studied in~\cite{CoDeGraph}. For the 3D medical datasets considered in this work, we did not empirically observe this failure mode.}
	\end{assumption}
	
The overview of CS3F is provided in Figure~\ref{fig:cs3f_overall_pipeline}. In the remainder of this section, we first define the anomaly scoring rule in an abstract token space that is independent of token construction details (Sec.~\ref{sec:scoring}). We then describe the construction of volumetric tokens from 2D foundation-model features via multi-axis slicing and depth pooling (Sec.~\ref{sec:tokenization}). Finally, we analyze the pooling-induced attenuation inherent to depth-pooling tokenization (Sec.~\ref{sec:attenuation}) and introduce the coarse-to-fine (C2F) search strategy that mitigates it, together with its computational analysis (Secs.~\ref{sec:c2f}--\ref{sec:c2f-complexity}). 
	
	\subsection{Batch-Based Zero-Shot Anomaly Scoring}
	\label{sec:scoring}
	
	Let
	\(C_i=\left\lbrace\mathbf{v}_i^1,\ldots,\mathbf{v}_i^{N}\right\rbrace\subset\R^d\)
	denote the token set extracted from volume $V_i$. For a query token $\mathbf{v}\in C_i$, its distance to another volume $V_j$ is defined as the nearest-token distance,
	\begin{equation}
		d(\mathbf{v},C_j)
		=
		\min_{1\leq k\leq N}
		\|\mathbf{v}-\mathbf{v}_j^k\|_2,
		\qquad j\neq i.
		\label{eq:nn-distance}
	\end{equation}
	This cross-subject distance measures how well the query token can be explained by the most similar local representation in the reference volume. Computing Eq.~\eqref{eq:nn-distance} for all reference volumes in the batch produces $B-1$ cross-subject nearest-neighbor distances. Sorting them in ascending order gives the mutual similarity profile,
	\begin{equation*}
		\D_{\B}(\mathbf{v})
		=
		\left[
		d(\mathbf{v})_{(1)},\ldots,d(\mathbf{v})_{(B-1)}
		\right],
		\label{eq:msp}
	\end{equation*}
	where $d(\mathbf{v})_{(t)}$ denotes the $t$-th smallest cross-subject distance. The token-level anomaly score is defined as the average of the $K$ smallest entries:
	\begin{equation}
		a(\mathbf{v})
		=
		\frac{1}{K}
		\sum_{t=1}^{K}
		d(\mathbf{v})_{(t)}.
		\label{eq:token-score}
	\end{equation}
The scoring rule is justified by Assumption~\ref{assump:doppelganger}. Normal tokens should find close counterparts in several other subjects because normal anatomy is repeatedly observed with similar visual and spatial characteristics. Anomalous tokens, by contrast, should lack close cross-subject analogues because pathological regions are less frequent and more variable in size, shape, intensity, and location. Consequently, even its nearest cross-subject matches are expected to remain relatively distant. In practice, we set
\(
K=\lfloor \rho(B-1)\rfloor
\) where \(\rho\in[0.1,0.3]\). This requires a token to have multiple close analogues while reducing sensitivity to individual outliers in reference volumes.
	
	After all token scores are computed, they are rearranged according to their spatial grid locations and interpolated to the input grid to obtain a voxel-level anomaly score map.
	
	\subsection{Volumetric Token Construction with 2D Foundation Models}
	\label{sec:tokenization}
	We now describe the construction of the token sets $C_i$ used in Sec.~\ref{sec:scoring}. An illustration of the tokenization process is provided in Figure~\ref{fig:tokenization_tikz}.
	\subsubsection{Slice Encoding}
	
	Let $f$ denote a 2D vision transformer with input resolution
	\(R\times R\) and patch stride $p$ (e.g., DINOv2 with input size $224\times224$ and patch stride $14$). Each 3D volume is processed along the three anatomical axes,
	\[
	\A=\{\mathrm{sagittal},\mathrm{coronal},\mathrm{axial}\}.
	\]
	For an axis $\ell\in\A$, the volume $V_i$ is decomposed into a sequence of $T_{\ell}$ 2D slices:
	\[
	V_i
	\longrightarrow
	\{S_{i,t}^{(\ell)}\}_{t=1}^{T_{\ell}}.
	\]
	Each slice is resized to match the encoder input resolution $R\times R$ and passed through it:
	\begin{equation}
		\widetilde{S}_{i,t}^{(\ell)}
		=
		\operatorname{Resize}
		\left(
		S_{i,t}^{(\ell)}
		\right),  \,\,
		f\left(\widetilde{S}_{i,t}^{(\ell)}\right)
		=
		\left\{
		\mathbf{f}_{i,(t,u,v)}^{(\ell)}
		\in\R^{D_f}
		\right\}_{u,v=1}^{P}.
		\label{eq:slice-encoding}
	\end{equation}
	Here, $D_f$ is the feature dimension and
	\(P=R/p\)
	is the number of patch tokens along each in-plane dimension. Therefore, each encoded slice produces a fixed $P\times P$ grid of patch features, independent of the original in-plane size of the volume.
	
	\subsubsection{Depth Pooling for 3D Token Construction}
	
	Slice-level patch features obtained from Eq.~\eqref{eq:slice-encoding}
	are aggregated across neighboring slices to form volumetric tokens. Let
	\[
	\mathbf q = (q_{\mathrm{sagittal}}, q_{\mathrm{coronal}}, q_{\mathrm{axial}})
	\]
	denote an axis-wise depth-pooling configuration, where \(q_\ell\) is the
	pooling kernel used when the volume is processed along axis \(\ell\). For
	compactness, we index axis-specific token sets and anomaly maps by
	\((\ell,\mathbf q)\); whenever an expression is evaluated for a fixed axis
	\(\ell\), the scalar pooling size used in that expression is \(q_\ell\).
	
	For simplicity, we select $q_{\ell}$ such that \(q_\ell\) divides \(T_\ell\). Therefore, consecutive
	slices can be grouped into non-overlapping blocks,
	\[
	G_m^{(\ell,\mathbf q)}
	=
	\{(m-1)q_\ell+1,\ldots,mq_\ell\},
	\qquad
	m=1,\ldots,M_\ell(\mathbf q),
	\]
	where
	\[
	M_\ell(\mathbf q)
	=
	\frac{T_\ell}{q_\ell}.
	\]
	For each depth block and in-plane patch location, the corresponding 3D
	token is obtained by average pooling and $\ell_2$ normalization:
	\begin{equation}
		\mathbf{z}_{i,(m,u,v)}^{(\ell,\mathbf q)}
		=
		\frac{1}{q_\ell}
		\sum_{t\in G_m^{(\ell,\mathbf q)}}
		\mathbf{f}_{i,(t,u,v)}^{(\ell)},
		\qquad
		\mathbf{z}_{i,(m,u,v)}^{(\ell,\mathbf q)}
		\leftarrow
		\frac{
			\mathbf{z}_{i,(m,u,v)}^{(\ell,\mathbf q)}
		}{
			\left\|
			\mathbf{z}_{i,(m,u,v)}^{(\ell,\mathbf q)}
			\right\|_2
		}.
		\label{eq:depth-pooling}
	\end{equation}
	Thus, for axis \(\ell\), the volumetric token grid has size
	\[
	M_\ell(\mathbf q)\times P\times P.
	\]
For example, when $V$ is a \(224^3\) volume with \(1\,\mathrm{mm}\) isotropic voxel spacing,
\(R=224\), and \(p=14\), each slice produces a \(16\times16\) patch grid.
If \(q_\ell=14\), tokenization along axis \(\ell\) produces a
\(16\times16\times16\) volumetric token grid, and each token summarizes an
approximate \(14\times14\times14\,\mathrm{mm}^3\) region. If \(q_\ell=2\),
the same axis produces a \(112\times16\times16\) token grid, and each token
summarizes a smaller region, approximately \(2\times14\times14\,\mathrm{mm}^3\).

\input{tokenization}

	\subsubsection{Projection to a Compact Feature Space}
	
	The feature dimension $D_f$ of modern foundation models can be large, making cross-subject pairwise distance computation expensive. We therefore project high-dimensional tokens obtained from \eqref{eq:depth-pooling} into a lower-dimensional feature space using a random projection matrix whose entries are sampled from $\mathcal{N}(0,1/d)$:
	\[
	\mathbf{R}_{\mathrm{proj}}^{(\ell)}
	\in\R^{D_f\times d},
	\qquad
	d\ll D_f.
	\]
	For all tokens extracted along anatomical axis \(\ell\), the projected representation is
	\begin{equation*}
		\mathbf{v}_{i,(m,u,v)}^{(\ell,\mathbf q)}
		=
		\left(\mathbf{R}_{\mathrm{proj}}^{(\ell)}\right)^\top
		\mathbf{z}_{i,(m,u,v)}^{(\ell,\mathbf q)}
		\in\R^d.
		\label{eq:projection}
	\end{equation*}
For each axis \(\ell\), the same projection matrix \(\mathbf{R}_{\mathrm{proj}}^{(\ell)}\) is used for all volumes and all tokens.
	By the Johnson--Lindenstrauss lemma~\citep{johnson1984extensions}, random projections approximately preserve pairwise Euclidean distances with high probability for sufficiently large $d$. This makes the projected features a suitable low-dimensional approximation for the mutual-similarity search in Eq.~\eqref{eq:nn-distance}, while substantially reducing memory usage and computational cost.
	
	\subsubsection{Axis-Wise Scoring and Fusion}
	Given an axis \(\ell\) and pooling configuration \(\mathbf q\), the
	projected features define an axis-specific token set for each volume
	\(V_i\):
	\begin{equation*}
		C_i^{(\ell,\mathbf q)}
		=
		\left\{
		\mathbf{v}_{i,(m,u,v)}^{(\ell,\mathbf q)}
		:
		1\leq m\leq M_\ell(\mathbf q),\;
		1\leq u,v\leq P
		\right\}.
		\label{eq:axis-token-set}
	\end{equation*}
	The batch-based scoring rule in Eq.~\eqref{eq:token-score} is applied
	independently to each axis-specific batch of token sets,
	\(\{C_i^{(\ell,\mathbf q)}\}_{i=1}^{B}\). This produces an axis-specific anomaly map,
	\[
	a_i^{(\ell,\mathbf q)}
	\in
	\mathbb{R}^{M_\ell(\mathbf q)\times P\times P}
	\]
	for each volume \(V_i \in \B\). Each axis-specific anomaly map \(a_i^{(\ell,\mathbf q)}\) is interpolated back to the voxel grid of
	\(V_i\). The multi-axis anomaly map for pooling configuration \(\mathbf q\)
	is then obtained by averaging:
	\begin{equation}
		a_i^{(\mathbf q)}(x,y,z)
		=
		\frac{1}{|\A|}
		\sum_{\ell\in\A}
		a_i^{(\ell,\mathbf q)}(x,y,z).
		\label{eq:axis-fusion}
	\end{equation}
	This multi-axis score fusion aggregates complementary anomaly evidence from orthogonal anatomical planes, allowing signals that are weak or hidden in one view to be reinforced by responses from other views. The fused map \(a_i^{(\mathbf{q})}\) is then used as the output voxel-level anomaly score map. The subject-level anomaly score is obtained by taking the maximum voxel score:
\begin{equation*}
	A_i^{(\mathbf q)}
	=
	\max_{(x,y,z)}
	a_i^{(\mathbf q)}(x,y,z).
	\label{eq:subject-score}
\end{equation*}

	\begin{remark}[Foreground masking]
		The proposed framework does not require foreground masking in general. However,
		when an anatomical foreground map is available, such as a brain mask in
		registered brain MRI or an organ mask in organ-specific CT analysis, it
		can be used to suppress irrelevant background regions. Let
		\(\mathcal{M}_i\) denote the binary foreground mask of volume \(V_i\).
		For each token, we compute the fraction of voxels within its spatial
		support that overlap with \(\mathcal{M}_i\). Only tokens whose foreground
		overlap is greater than a threshold \(\delta\) are included in the computation. This reduces the number of tokens per volume, thereby lowering both memory usage and processing time. After
		interpolation, voxels outside \(\mathcal{M}_i\) are assigned zero anomaly
		score.
	\end{remark}
	
	\subsection{Pooling-Induced Lesion Attenuation}
	\label{sec:attenuation}
	The depth-pooling kernel $q$ controls the spatial support of each volumetric token along the slicing direction. A large pooling kernel produces stable 3D tokens by averaging features across multiple neighboring slices, which helps suppress slice-level noise and improve matching between normal anatomical tokens. However, the same averaging operation can attenuate subtle abnormal signals when the lesion occupies only a small fraction of the pooling window.
	
	To make this effect explicit, we consider a simplified setting in which tokens are analyzed before feature normalization and random projection. This corresponds to a fixed depth block
	of \(q\) slices and a fixed in-plane patch location in
	Eq.~\eqref{eq:depth-pooling}. Suppressing the axis and spatial indices for
	readability, the pooled token is written as
	\begin{equation*}
		\mathbf{z}
		=
		\frac{1}{q}
		\sum_{t=1}^{q}
		\mathbf{u}_t ,
		\label{eq:pooled-token-simple}
	\end{equation*}
	where \(\mathbf{u}_t\) is the slice-level feature at slice \(t\). Let
	\(\bar{\mathbf{z}}\) denote the corresponding pooled token from a normal
	counterpart:
	\begin{equation*}
		\bar{\mathbf{z}}
		=
		\frac{1}{q}
		\sum_{t=1}^{q}
		\bar{\mathbf{u}}_t .
		\label{eq:normal-pooled-token-simple}
	\end{equation*}
	Let \(\mathcal{A}\subseteq\{1,\ldots,q\}\) be the subset of slices affected
	by a local abnormality, and define the lesion occupancy within the pooling
	window as
		\(\alpha
		=
		|\mathcal{A}|/{q}.
		\) We have the following lemma.

	\begin{lemma}[Pooled-token sensitivity]
		\label{lem:pooled_token_sensitivity}
		Assume that the average feature displacement over the anomalous slices $\mathcal{A}$ is
		lower bounded by
		\begin{equation*}
			\left\|
			\frac{1}{|\mathcal{A}|}
			\sum_{t\in\mathcal{A}}
			\left(
			\mathbf{u}_t-\bar{\mathbf{u}}_t
			\right)
			\right\|_2
			\geq
			\Delta_0 ,
			\label{eq:avg-anomalous-displacement}
		\end{equation*}
		and that the residual feature difference outside the anomalous slices is
		bounded by
		\begin{equation*}
			\left\|
			\mathbf{u}_t-\bar{\mathbf{u}}_t
			\right\|_2
			\leq
			\varepsilon,
			\qquad
			\forall t\in\{1,\ldots,q\}\setminus\mathcal{A}.
			\label{eq:normal-residual-bound}
		\end{equation*}
		Then the pooled-token displacement satisfies,
		\begin{equation*}
			\left\|
			\mathbf{z}-\bar{\mathbf{z}}
			\right\|_2
			\geq
			\alpha\Delta_0
			-
			(1-\alpha)\varepsilon .
			\label{eq:pooled-token-sensitivity-bound}
		\end{equation*}
	\end{lemma}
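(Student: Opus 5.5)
The plan is a direct triangle-inequality argument: split the pooled difference into an anomalous part and a residual part, bound the first from below and the second from above. Write
\[
\mathbf{z}-\bar{\mathbf{z}}
=
\frac{1}{q}\sum_{t\in\mathcal{A}}(\mathbf{u}_t-\bar{\mathbf{u}}_t)
+
\frac{1}{q}\sum_{t\notin\mathcal{A}}(\mathbf{u}_t-\bar{\mathbf{u}}_t)
=: \mathbf{a}+\mathbf{r}.
\]
The reverse triangle inequality then gives $\|\mathbf{z}-\bar{\mathbf{z}}\|_2\geq \|\mathbf{a}\|_2-\|\mathbf{r}\|_2$, so it suffices to treat the two terms separately.

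For the anomalous term, rescale by $|\mathcal{A}|$:
\[
\mathbf{a}=\frac{|\mathcal{A}|}{q}\cdot\frac{1}{|\mathcal{A}|}\sum_{t\in\mathcal{A}}(\mathbf{u}_t-\bar{\mathbf{u}}_t).
\]
The assumed lower bound on the average anomalous displacement then yields $\|\mathbf{a}\|_2\geq \alpha\Delta_0$. This is where it matters that the hypothesis bounds the norm of the \emph{average} rather than the individual norms. No cancellation argument is needed, because the hypothesis is already stated on the averaged vector.

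For the residual term, apply the ordinary triangle inequality together with the per-slice bound $\varepsilon$:
\[
\|\mathbf{r}\|_2\leq \frac{1}{q}\sum_{t\notin\mathcal{A}}\|\mathbf{u}_t-\bar{\mathbf{u}}_t\|_2\leq \frac{q-|\mathcal{A}|}{q}\,\varepsilon=(1-\alpha)\varepsilon.
\]
Combining the two bounds gives the claimed inequality.

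None of these steps is genuinely hard. The only points needing care are the edge cases. When $\mathcal{A}=\emptyset$, the average in the hypothesis is undefined, so the lemma implicitly assumes $|\mathcal{A}|\geq1$; with $\alpha=0$ the bound would in any case be trivially non-positive. When the right-hand side is negative, the statement holds vacuously.
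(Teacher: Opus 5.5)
Your proposal is correct and is exactly the argument the paper has in mind: the paper's proof simply says the bound is a direct consequence of the triangle inequality. Your split into the anomalous part (equal to $\alpha$ times the averaged displacement, so of norm at least $\alpha\Delta_0$) and the residual part (of norm at most $(1-\alpha)\varepsilon$), followed by the reverse triangle inequality, is that argument carried out in full.
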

	
	\begin{proof}
		It is a direct consequence of the triangle inequality.
	\end{proof}
Lemma~\ref{lem:pooled_token_sensitivity} shows that depth pooling can still
discriminate anomalous tokens even when the lesion occupancy \(\alpha\) is
small, provided that the feature difference
\(\Delta_0\) is sufficiently large.
Thus, tokenization with a large pooling kernel $q$ can remain effective for large or high-contrast
abnormalities. Conversely,
when \(\Delta_0\) is weak, either due to subtle lesion appearance or the limitation of the foundation encoder $f$, the effective signal
\(\alpha\Delta_0\) may become comparable to the residual variation
\((1-\alpha)\varepsilon\). This attenuation becomes more severe as \(q\)
increases, since lesions now occupy a smaller fraction of the pooling
window. In this regime, it becomes difficult to separate abnormal tokens from normal tokens using the proposed framework.
	
By reducing \(q\) and thus increasing the occupancy \(\alpha\), we can directly mitigate this attenuation effect and preserve stronger local abnormal signals for focal lesions. However, smaller pooling kernels
produce more volumetric tokens, and applying cross-subject mutual scoring exhaustively
at this finer resolution substantially increases the matching runtime. This
resolution--cost trade-off motivates the coarse-to-fine search strategy
introduced next.
	
	\subsection{Coarse-to-Fine Search for Multi-scale Token Scoring}
	\label{sec:c2f}
	\input{c2f}
	To obtain the sensitivity of fine-scale tokens without performing
	exhaustive matching, we use a coarse-to-fine (C2F) search strategy.
	The key idea is that coarse tokens provide a robust and computationally
	efficient way to identify anatomically similar regions across subjects.
	Once such coarse correspondences are found, fine-scale matching can be
	restricted to the fine tokens inside these selected coarse regions.
	In this way, coarse tokens serve as a routing mechanism: they locate
	plausible cross-subject anatomical regions, while fine tokens provide
	the spatial detail needed to detect focal abnormalities.  An illustration of C2F routing is provided in Figure~\ref{fig:c2f_routing}.
	
	Let \(\mathbf q_c\) and \(\mathbf q_f\) denote the coarse- and fine-pooling configurations. For a fixed axis \(\ell\), let
	\(q_{c,\ell}\) and \(q_{f,\ell}\) be the corresponding scalar pooling
	kernels, with
	\begin{equation*}
		q_{f,\ell} < q_{c,\ell},
		\qquad
		q_{c,\ell} = h_\ell q_{f,\ell},
		\label{eq:c2f-scale-relation}
	\end{equation*}
	where \(h_\ell\) is an integer subdivision factor. Under this condition,
	each coarse token along axis \(\ell\) corresponds to exactly \(h_\ell\)
	fine tokens along the depth direction for the same in-plane patch
	location. For a coarse-grid location \(r=(m,u,v)\), we denote the corresponding
	coarse token by
	\(
	\mathbf v_{i,r}^{(\ell,\mathbf q_c)}
	\)
	and its associated fine-token children by
	\begin{equation*}
		\mathcal F_{i,r}^{(\ell)}
		=
		\left\{
		\mathbf v_{i,s}^{(\ell,\mathbf q_f)}
		:
		s\in\operatorname{child}_{\ell}(r)
		\right\},
		\qquad
		|\mathcal F_{i,r}^{(\ell)}| = h_\ell .
		\label{eq:c2f-children}
	\end{equation*}
	Here, 
	\(\operatorname{child}_{\ell}(r)\) denotes the fine-grid locations contained
	within the coarse region \(r\) along axis \(\ell\) of $V_i$. For a query coarse token \(\mathbf v_{i,r}^{(\ell,\mathbf q_c)}\), we first
	find its \(L\) nearest coarse tokens in each reference volume \(V_j\),
	\(j\neq i\):
	\begin{equation*}
		\mathcal N_c(r;i,j)
		=
		\operatorname{TopL}_{r'}
		\left(
		\left\|
		\mathbf v_{i,r}^{(\ell,\mathbf q_c)}
		-
		\mathbf v_{j,r'}^{(\ell,\mathbf q_c)}
		\right\|_2
		\right),
		\label{eq:coarse-neighbors}
	\end{equation*}
	where the notation \(\operatorname{TopL}\) returns the \(L\) nearest coarse-grid
	locations in the reference volume. These locations define the coarse
	regions in \(V_j\) that are most similar to the query region in \(V_i\).
	The corresponding restricted fine-scale search space is then defined as
	\begin{equation*}
		\mathcal G_j^{(\ell)}(i,r)
		=
		\bigcup_{r'\in\mathcal N_c(r;i,j)}
		\mathcal F_{j,r'}^{(\ell)} .
		\label{eq:fine-search-space}
	\end{equation*}
	For a fine query token
	\(
	\mathbf v_{i,s}^{(\ell,\mathbf q_f)}
	\in
	\mathcal F_{i,r}^{(\ell)},
	\)
	its distance to reference volume \(V_j\) is approximated by searching only
	over the restricted candidate set \(\mathcal G_j^{(\ell)}(i,r)\):
	\begin{equation}
		d
		\left(
		\mathbf v_{i,s}^{(\ell,\mathbf q_f)}, V_j
		\right)
		\approx
		\min_{\mathbf v\in\mathcal G_j^{(\ell)}(i,r)}
		\left\|
		\mathbf v_{i,s}^{(\ell,\mathbf q_f)}
		-
		\mathbf v
		\right\|_2
		=
		\widetilde d
		\left(
		\mathbf v_{i,s}^{(\ell,\mathbf q_f)}, V_j
		\right).
		\label{eq:restricted-fine-distance}
	\end{equation}
	This replaces exhaustive search over all fine-grained tokens in \(V_j\) with
	search over the fine children of the most similar coarse regions. The
	approximation quality of Eq.~\eqref{eq:restricted-fine-distance} depends
	on whether the selected coarse neighbors preserve the relevant anatomical
	correspondences. The parameter $L$ directly controls this approximation: increasing $L$ enlarges the restricted fine-scale candidate set and increases the likelihood that it includes the true exhaustive fine-scale nearest neighbor. When \(L=N_c^{(\ell)}\), where \(N_c^{(\ell)}\) is the number of coarse
	tokens along axis \(\ell\), all coarse regions in the reference volume are
	selected, and the restricted search becomes equivalent to full
	fine-scale matching.
	
	Repeating Eq.~\eqref{eq:restricted-fine-distance} over all reference
	volumes produces an approximation of the fine-scale mutual similarity
	vector for the query fine token. As in Eq.~\eqref{eq:token-score}, the
	fine-token anomaly score is computed as the average of the \(K\) smallest
	cross-subject distances:
	\begin{equation*}
		a
		\left(
		\mathbf v_{i,s}^{(\ell,\mathbf q_f)}
		\right)
		=
		\frac{1}{K}
		\sum_{t=1}^{K}
		\widetilde d
		\left(
		\mathbf v_{i,s}^{(\ell,\mathbf q_f)}
		\right)_{(t)} ,
		\label{eq:fine-token-score}
	\end{equation*}
	where
	\(
	\widetilde d
	\left(
	\mathbf v_{i,s}^{(\ell,\mathbf q_f)}
	\right)_{(t)}
	\)
	denotes the \(t\)-th smallest entry of the approximate fine-scale mutual
	similarity profile across the batch $\mathcal{B}$. The resulting fine-token scores are then arranged on the fine-token grid, interpolated back to the input voxel space, and fused using
 Eq.~\eqref{eq:axis-fusion} to obtain the
	fine-resolution anomaly map
	\(a_i^{(\mathbf q_f)}.\)
	
	\paragraph{Multi-scale score fusion.}
	The C2F coarse pass already computes the coarse cross-subject distances
	used for routing. Therefore, the coarse-scale anomaly map
	\(a_i^{(\mathbf q_c)}\) can be obtained with little additional cost by
	applying Eq.~\eqref{eq:token-score} to the same coarse-distance profiles.
	Since the former provides stable anatomical
	matching while the latter preserves focal anomaly evidence, we combine this coarse map with the fine C2F map
	\(a_i^{(\mathbf q_f)}\) to obtain a multi-scale anomaly map,
	\begin{equation}
		a_i^{\mathrm{MS}}(x,y,z)
		=
		\frac{1}{2}
		\left(
		a_i^{(\mathbf q_c)}(x,y,z)
		+
		a_i^{(\mathbf q_f)}(x,y,z)
		\right).
		\label{eq:multiscale-fusion}
	\end{equation}
	The subject-level multi-scale anomaly score is then defined as 
	\begin{equation*}
		A_i^{\mathrm{MS}}
		=
		\max_{(x,y,z)}
		a_i^{\mathrm{MS}}(x,y,z).
		\label{eq:multiscale-subject-score}
	\end{equation*}

	\subsection{Computational Analysis of Coarse-to-Fine Search}
	\label{sec:c2f-complexity}
	
	We compare exhaustive fine-scale matching with C2F matching for one
	query and one reference volume along a single axis $\ell$. For readability, we omit the axis notation \(\ell\) in this subsection.  Let $N_c$ denote the number of coarse tokens per volume. Since each
	coarse token spans $h$ fine tokens along the depth direction, the
	number of fine tokens is $N_f = hN_c$.
	
	In exhaustive fine-scale nearest neighbor search, every fine query token is compared
	with all $N_f$ fine tokens in the reference volume. With projected
	feature dimension $d$, this has a complexity of
	\begin{equation}
		O(N_f^2 d) = O(h^2 N_c^2 d).
		\label{eq:global-fine-cost}
	\end{equation}
C2F matching proceeds in two stages. First, coarse routing computes all
	pairwise coarse distances and retains the $L$ nearest coarse neighbors
	for each coarse query token. This costs $O(N_c^2 d)$, ignoring the
	lower-order top-$L$ selection cost, which is small in the intended
	regime where $L$ is small and $d$ is moderately large. Second, each fine
	query token is matched only against the $Lh$ fine children of its
	selected coarse neighbors, rather than against all $N_f$ fine reference
	tokens. The restricted fine search therefore costs
	\begin{equation*}
		O(N_f \cdot Lh \cdot d) = O(Lh^2 N_c d).
		\label{eq:restricted-fine-cost}
	\end{equation*}
	The total C2F cost is
	\begin{equation}
		O(N_c^2 d) + O(Lh^2 N_c d).
		\label{eq:c2f-cost}
	\end{equation}
	
	Dividing \eqref{eq:global-fine-cost} by \eqref{eq:c2f-cost} gives the
	operation-count reduction,
	\begin{equation*}
		\rho_{\mathrm{C2F}}
		=
		\frac{h^2 N_c^2 d}{N_c^2 d + Lh^2 N_c d}
		=
		\frac{h^2}{1 + Lh^2/N_c}.
		\label{eq:c2f-ratio}
	\end{equation*}
	which approaches the ideal \(h^2\)-fold factor whenever we select L such that
	\begin{equation*}
		L \ll \frac{N_c}{h^2}.
	\end{equation*}
	
However, this reduction is not memory-free. C2F may require retaining both
coarse and fine token representations, together with the coarse routing
indices used for restricted matching.

\section{Experiments}
\label{sec:experiments}

In this section, we evaluate the proposed CS3F framework for anomaly
detection in 3D medical images. The experiments cover both anomaly
classification (AC) and anomaly segmentation (AS). The main experiments
are conducted on brain MRI. We further evaluate the
proposed CS3F on lung CT to examine cross-organ applicability.

We report three operating configurations of the proposed CS3F framework.
\textbf{CS3F-C} uses only the coarse-scale anomaly map
\(a_i^{(\mathbf q_c)}\) in Eq.~\ref{eq:multiscale-fusion} and serves as the efficient configuration, requiring
the least memory and runtime. \textbf{CS3F-F} uses only the fine-scale C2F
anomaly map \(a_i^{(\mathbf q_f)}\) and is used to isolate the effect of
fine tokenization. \textbf{CS3F-MS} combines coarse and fine anomaly maps as
in Eq.~\eqref{eq:multiscale-fusion}, aiming to balance the
stability of coarse tokens with the focal-lesion sensitivity of fine tokens.

\subsection{Data Preparation}
\label{sec:data-preparation}

\paragraph{Brain MRI datasets.}
\vspace{0.8em}
We use IXI~\citep{ixi_dataset} as the healthy control cohort and metastasis BraTS-2025
METS~\citep{brats} as the primary abnormal cohort. Since IXI provides native
T1-weighted (T1w) and T2-weighted (T2w) scans, most of the experiments are conducted on these two modalities. For each modality,
\(20\%\) of IXI and BraTS-METS is reserved for evaluation, while the
remaining \(80\%\) is used only for training baseline methods. The final
mixed test batch contains 180 volumes, consisting of 115 IXI healthy
subjects and 65 BraTS-METS abnormal subjects. The proposed method performs
inference jointly across this full mixed batch.

To evaluate generalization beyond brain metastases, we further use
BraTS-GLI T2w~\citep{bratsgli,menze2014multimodal,bakas2017advancing} and
ATLAS R2.0 T1w~\citep{liew2022large}. BraTS-GLI evaluates glioma, which is
a different tumor type from metastasis, while ATLAS R2.0 evaluates stroke
lesions, which are non-tumor abnormalities. In these experiments, the 115
selected IXI healthy subjects are kept fixed, and the 65 BraTS-METS
abnormal subjects are replaced by abnormal subjects from BraTS-GLI or
ATLAS R2.0.

\paragraph{Brain MRI preprocessing.}
All brain MRI volumes are spatially standardized before inference. For the
BraTS-METS and BraTS-GLI experiments, all images are registered to the
SRI-24 atlas~\citep{rohlfing2010sri24} using CaPTk~\citep{pati2019cancer}, matching the standard pipeline used by BraTS. For the ATLAS R2.0
experiment, all images are registered to MNI-152 space to match the standard
registration  of ATLAS R2.0.

After spatial standardization, each scan is skull-stripped using
HD-BET~\citep{isensee2019automated}. Since skull-stripping produces large, empty borders, the central brain region is cropped to a cube of $156^3$ voxels to remove most of the background. The cropped volume is then
resampled to \(224^3\), histogram-standardized using a fixed reference
template~\citep{nyul2000new}, and normalized to \([0,1]\). For segmentation
annotations, we use a binary lesion setting and treat all voxels with
labels greater than zero as anomalous voxels.

\paragraph{Lung CT datasets.}
To evaluate cross-organ applicability, we conduct an additional lung CT
experiment using pseudo-normal lung CT volumes from LIDC-IDRI
\cite{armato2011lung} and abnormal lung cancer CT volumes from the MSD
Lung dataset~\citep{antonelli2022medical}. The pseudo-normal cohort is
extracted from LIDC-IDRI by selecting scans with no annotated nodules
(\texttt{unblindedReadNodule} \(=0\)) and slice thickness not exceeding
2 mm. This results in 57 pseudo-normal lung CT volumes. The abnormal cohort
contains 62 lung cancer volumes from MSD Lung with lesion annotations.

\paragraph{Lung CT preprocessing.}
Lung CT preprocessing differs from the brain MRI pipeline in two key aspects: CT volumes are not registered to a common atlas, and the target organ occupies only a portion of the full scan volume. Following the strategy used in~\cite{kim20243d}, we first segment the left and right lungs using lungmask~\citep{hofmanninger2020automatic}. 
CT volumes are then resampled to 1 mm isotropic voxel spacing. The intensities are clipped to the Hounsfield unit range $  [-800, 400]  $ and min--max normalized. Because lung sizes vary across subjects, each left- or right-lung crop is further adjusted to a fixed matrix size of $  160 \times 238 \times 336  $ via crop-or-pad operations, without altering the voxel spacing.

Our proposed CS3F is then applied independently to the left and right lung crops. Finally, the full-lung anomaly map and subject-level score are obtained by combining the left- and right-lung results. This pipeline does not require atlas registration, allowing us to evaluate the robustness of the batch-based ZSAD framework under less standardized, cross-organ preprocessing conditions.

\begin{table*}[t]
	\centering
	\caption{Summary of datasets used in the experiments. For the brain
		MRI generalization experiments, the selected IXI healthy cohort is
		kept fixed and the abnormal cohort is replaced by another pathology
		dataset.}
	\label{tab:datasets}
	\begin{tabular}{llllcc}
		\toprule
		Normal cohort & Abnormal cohort & Modality & Anomaly type
		& Normal & Abnormal \\
		\midrule
		IXI & BraTS-METS & T1w & Metastasis & 115 & 65 \\
		IXI & BraTS-METS & T2w      & Metastasis & 115 & 65 \\
		IXI & BraTS-GLI  & T2w      & Glioma     & 115 & 65 \\
		IXI & ATLAS R2.0 & T1w      & Stroke     & 115 & 65 \\
		LIDC-IDRI & MSD Lung & CT    & Lung cancer & 57 & 62 \\
		\bottomrule
	\end{tabular}
\end{table*}

\subsection{Experimental Settings}
\label{sec:experimental-settings}
\vspace{0.8em}
\paragraph{Backbone and feature extraction.}
All experiments use a DINOv2-L/14 encoder as the main backbone. Following previous batch-based ZSAD methods~\citep{MuSc,CoDeGraph}, we use patch features from both shallow and deep transformer layers, specifically layers \(6,12,18,\) and \(24\). For each layer, slice-level patch features are extracted as in Eq.~\eqref{eq:slice-encoding} and then converted into volumetric tokens using the multi-axis tokenization procedure described in Sec.~\ref{sec:tokenization}. The proposed scoring procedure is applied independently to each selected layer, and the four resulting voxel-level anomaly maps are averaged to obtain the final layer-aggregated anomaly map. To keep CS3F efficient, all token features are projected to $d=128$ dimensions. All experiments were conducted on a workstation equipped with a single NVIDIA RTX 4070 Ti Super GPU (16 GB) and 64 GB of system RAM.

\paragraph{Pooling configuration.}
The coarse- and fine-pooling configurations are selected based on the input volume geometry. For brain MRI, all preprocessed volumes have size $224^3$ after resampling. We use
\[
\mathbf{q}_c=(14,14,14), \qquad \mathbf{q}_f=(2,2,2),
\]
where \(\mathbf{q}_c\) denotes the coarse-pooling configuration and
\(\mathbf{q}_f\) denotes the fine-pooling configuration. Under this setting,
each coarse token covers approximately
\(9.75\times9.75\times9.75~\mathrm{mm}^3\), while each fine token covers
approximately \(9.75\times9.75\times1.39~\mathrm{mm}^3\), depending on the
processing axis.

For lung CT, each left or right lung crop has size
\(160 \times 238 \times 336\) after preprocessing, corresponding to the sagittal, coronal, and
axial dimensions in our implementation. We use
\[
\mathbf{q}_c=(8,14,14), \qquad \mathbf{q}_f=(2,2,2).
\]
This adapts the coarse tokenization to the non-cubic lung crop while using the
same fine-pooling depth across all three axes.

\paragraph{Scoring and C2F configuration.}
Unless otherwise stated, the foreground masking threshold is set to
\(\delta=0\). The nearest-neighbor ratio in Eq.~\eqref{eq:token-score}
is set to \(\rho=0.1\). For the coarse-to-fine search, we use \(L=16\) nearest coarse
tokens as the routing budget for restricted fine-scale matching. The overall experimental configurations of the proposed CS3F framework are summarized in Table~\ref{tab:implementation-settings}.

\begin{table}[t]
	\centering
	\footnotesize
	\caption{Implementation settings for brain MRI and lung CT.}
	\label{tab:implementation-settings}
	\begin{tabular}{lll}
		\toprule
		Setting & Brain MRI & Lung CT \\
		\midrule
		Input size
		& \(224 \times 224 \times 224\)
		& \(160 \times 238 \times 336\) \\
		Coarse configuration \(\mathbf{q}_c\)
		& \((14,14,14)\)
		& \((8,14,14)\) \\
		Fine configuration \(\mathbf{q}_f\)
		& \((2,2,2)\)
		& \((2,2,2)\) \\
		Foreground masking \(\delta\)
		& 0
		& 0 \\
		Batch scoring \(\rho\)
		& 0.1
		& 0.1 \\
		C2F routing \(L\)
		& 16
		& 16 \\
		Encoder
		& DINOv2-L/14
		& DINOv2-L/14 \\
		Feature layers
		& \(6,12,18,24\)
		& \(6,12,18,24\) \\
		Projection dimension
		& 128
		& 128 \\
		\bottomrule
	\end{tabular}
\end{table}

\paragraph{Baselines.}
We compare CS3F with representative zero-shot and unsupervised anomaly
detection baselines. For CLIP-based zero-shot anomaly detection, we include
WinCLIP~\citep{WinCLIP}, AnomalyCLIP~\citep{AnomalyCLIP}, and
APRIL-GAN~\citep{APRIL-GAN}. WinCLIP uses a ViT-B/14@240 backbone, is evaluated without fine-tuning, and relies solely on prompt engineering at inference time. Since the official
implementation is not publicly available, we adapt an open-source PyTorch
implementation\footnote{
	\url{https://github.com/zqhang/Accurate-WinCLIP-pytorch}} and extend it to
3D by applying it slice-wise.

We use a ViT-L/14@336 backbone for AnomalyCLIP and APRIL-GAN and fine-tune both models with their official implementations. Additionally, we construct an auxiliary dataset called MedMix, comprising 3,970 2D images (1,874 normal and 2,096 abnormal) with segmentation annotations. MedMix is assembled from liver CT~\citep{bilic2023liver}, retinal OCT~\citep{hu2019automated}, CVC-300~\citep{vazquez2017benchmark}, and CVC-ClinicDB~\citep{bernal2015wm}, and does not include brain MRI or lung CT data . For the liver CT and retinal OCT subsets, we follow the data extraction protocol of~\cite{bmad}.

All CLIP-based baselines are applied slice-wise along the axial, coronal, and sagittal planes. Their voxel-level anomaly maps are first computed for individual slices, then interpolated back to the input volume grid, and finally the three axis-wise maps are averaged. For subject-level anomaly detection, we use the image-level anomaly scores produced by these methods and take the maximum score over all slices from all three axes.

We further include two reconstruction-based UAD baselines trained solely on the T1w or T2w modality of healthy IXI volumes: a 3D denoising autoencoder (DAE)~\citep{pmlr-v172-kascenas22a} and the diffusion-based pDDPM~\citep{behrendt2024patched}. The DAE is implemented as a 3D U-Net with skip connections~\citep{xu2024feasibility}, using a strong Gaussian corruption with $\sigma = 3$, following the configuration described by~\cite{LIANG2026103763}. It is trained on centrally cropped $160^3$ volumes after Z-normalization, using the Adam optimizer with a learning rate of $10^{-3}$ for 100 epochs. For pDDPM, we employ a 2D U-Net backbone~\citep{ronneberger2015u} and apply simplex noise (1000 diffusion noise steps) on patches of size $48\times48$. Training is performed for 1000 epochs using Adam optimization with a learning rate of $10^{-4}$ and a batch size of 32. During inference, the subject-level anomaly score for both methods is computed as the mean reconstruction error averaged across all foreground voxels.

\paragraph{Evaluation metrics.}
For anomaly classification, we report patient-level AUROC (P-AUROC), average
precision (P-AP), and maximum F1 score (P-F1). For anomaly segmentation, we
report voxel-level AUROC (V-AUROC), voxel-level AP (V-AP), Dice, and
intersection over union (IoU). V-AUROC and V-AP are computed over all
foreground organ voxels from all test volumes. Black background voxels
introduced by registration, cropping, or padding are excluded, and all
voxel-level metrics are computed at the input-volume resolution.

Unless otherwise specified, Dice denotes the maximum Dice obtained by sweeping
the threshold over the voxel-level anomaly map on abnormal subjects. IoU is
reported at the same threshold. Since Dice and IoU are not well-defined for
normal subjects with empty ground-truth lesion masks, they are computed
only for abnormal subjects, within the foreground organ region.

\subsection{Anomaly Detection}
\label{sec:anomaly-detection}

We first evaluate subject-level anomaly detection, where the main question
is whether a method can reliably distinguish abnormal from healthy
subjects. Table~\ref{tab:ad_results} reports patient-level AUROC, AP, and
F1 on four brain MRI benchmarks.

As shown in Table~\ref{tab:ad_results}, reconstruction-based UAD baselines achieve near-perfect patient-level performance. This can be attributed to the fact that 3D DAE and pDDPM are trained exclusively on normal IXI volumes and use reconstruction errors aggregated over all foreground voxels as patient-level anomaly scores. Such global scores are highly sensitive to the distribution gap between IXI and the target abnormal cohorts: the models may reconstruct normal regions in IXI test samples with a high accuracy, while producing larger reconstruction errors even for normal-appearing regions in BraTS or ATLAS volumes. This is evidenced by their weaker segmentation performance, as further examined in Sec.~\ref{sec:anomaly-segmentation}.

Among ZSAD methods, the proposed CS3F variants consistently outperform the CLIP-based baselines. More importantly, their performance is stable across datasets: all CS3F variants obtain patient-level AUROC above 84\%, whereas CLIP-based methods vary substantially. For example, AnomalyCLIP reaches nearly 90\% AUROC on BraTS metastases datasets but drops to 63.63\% on the ATLAS R2.0 stroke benchmark. Even though the visual features may contain distinguishable anomalous signals, the image-text alignment, which was primarily optimized on natural images, is unable to consistently capture them, leading to this discrepancy. On the other hand, CS3F avoids possible misalignment between language and vision by relying solely on visual features.

Among the CS3F variants, coarse and multi-scale scoring are most effective
for subject-level detection, with CS3F-MS achieving the best performance on
three out of four datasets. In contrast, CS3F-F generally performs worse
at patient-level ranking. This is expected because the subject-level
score is defined from the maximum response over 3D tokens. Fine tokens
cover smaller spatial supports and are more sensitive to local feature
variation, isolated normal outliers, and nearest-neighbor mismatches.
Such localized responses can perturb the maximum subject score and destabilize
patient-level ranking. Therefore, fine-scale tokens are not primarily
beneficial for anomaly classification; their main role is to preserve
localized anomaly evidence for lesion-wise detection, as analyzed in
Sec.~\ref{sec:c2f_analysis}.

\begin{table*}[t]
	\centering
	\caption{Patient-level anomaly detection results. Values are reported as
		percentages. \textbf{Bold} indicates the best zero-shot result, and
		\underline{underline} indicates the best overall result.}
	\label{tab:ad_results}
	\setlength{\tabcolsep}{3.pt}
	
	\begin{minipage}{0.49\textwidth}
		\centering
		\textbf{BraTS-METS T2w}\\[0.5ex]
		\begin{tabular}{llccc}
			\toprule
			Method & Train & AUROC & AP & F1 \\
			\midrule
			3D DAE & IXI T2w& \underline{99.8} & \underline{99.7} & 96.9 \\
			pDDPM & IXI T2w& 98.5 & 97.6 & 93.7 \\
			AnomalyCLIP & MedMix & 90.6 & 87.0 & 79.3 \\
			APRIL-GAN & MedMix & 30.3 & 27.3 & 53.1 \\
			WinCLIP & -- & 56.3 & 40.7 & 53.1 \\
			\hdashline \\[-1ex]
			CS3F-F & -- & 91.8 & 83.9 & 78.3 \\
			CS3F-C & -- & \textbf{94.9} & 87.8 & \textbf{85.3} \\
			CS3F-MS & -- & 94.7 & \textbf{87.9} & 84.9 \\
			\bottomrule
		\end{tabular}
	\end{minipage}
	\hfill
	\begin{minipage}{0.49\textwidth}
		\centering
		\textbf{BraTS-METS T1w}\\[0.5ex]
		\begin{tabular}{llccc}
			\toprule
			Method & Train & AUROC & AP & F1 \\
			\midrule
			3D DAE & IXI T1w& \underline{100.0} & \underline{100.0} & \underline{100.0} \\
			pDDPM & IXI T1w& 99.90& 99.9 & 98.5 \\
			AnomalyCLIP & MedMix & 88.3 & 82.5 & 77.8 \\
			APRIL-GAN & MedMix & 43.7 & 36.8 & 53.1 \\
			WinCLIP & -- & 73.8 & 56.9 & 63.4 \\
			\hdashline \\[-1ex]
			CS3F-F & -- & 96.0 & 90.9 & 88.2 \\
			CS3F-C & -- & \textbf{97.0} & 92.4 & 89.6 \\
			CS3F-MS & -- & \textbf{97.0} & \textbf{92.5} & \textbf{90.8} \\
			\bottomrule
		\end{tabular}
	\end{minipage}
	
	\vspace{1.0ex}
	
	\begin{minipage}{0.49\textwidth}
		\centering
		\textbf{BraTS-GLI T2w}\\[0.5ex]
		\begin{tabular}{llccc}
			\toprule
			Method & Train & AUROC & AP & F1 \\
			\midrule
			3D DAE & IXI T2w& 99.2 & 98.5 & 94.7 \\
			pDDPM & IXI T2w& \underline{99.8} & \underline{99.6} & \underline{97.7} \\
			AnomalyCLIP & MedMix & 73.5 & 66.0 & 62.9 \\
			APRIL-GAN & MedMix & 62.5 & 52.2 & 54.2 \\
			WinCLIP & -- & 73.8 & 56.9 & 63.6 \\
			\hdashline \\[-1ex]
			CS3F-F & -- & 98.3 & 96.8 & \textbf{94.7} \\
			CS3F-C & -- & 99.0 & 98.3 & 94.6 \\
			CS3F-MS & -- & \textbf{99.2} & \textbf{98.6} & \textbf{94.7} \\
			\bottomrule
		\end{tabular}
	\end{minipage}
	\hfill
	\begin{minipage}{0.49\textwidth}
		\centering
		\textbf{ATLAS R2.0 T1w}\\[0.5ex]
		\begin{tabular}{llccc}
			\toprule
			Method & Train & AUROC & AP & F1 \\
			\midrule
			3D DAE & IXI T1w& 98.5 & 93.9 & {98.5} \\
			pDDPM & IXI T1w& \underline{100.0} & \underline{100.0} & \underline{99.2}\\
			AnomalyCLIP & MedMix & 63.6 & 45.6 & 60.1 \\
			APRIL-GAN & MedMix & 41.6 & 40.3 & 53.1 \\
			WinCLIP & -- & 58.2 & 43.3 & 54.1 \\
			\hdashline \\[-1ex]
			CS3F-F & -- & 84.8 & 73.5 & 72.4 \\
			CS3F-C & -- & 88.2 & 72.8 & \textbf{79.3} \\
			CS3F-MS & -- & \textbf{88.6} & \textbf{75.8} & 76.9 \\
			\bottomrule
		\end{tabular}
	\end{minipage}
\end{table*}

\subsection{Anomaly Segmentation}
\label{sec:anomaly-segmentation}

We next evaluate voxel-level anomaly segmentation using voxel-level AUROC,
AP, Dice, and IoU. Although these
aggregate metrics are naturally biased toward larger lesions, they remain the standard and most direct measures of overall segmentation quality. The results are presented in Table~\ref{tab:as_results}, and the segmentation visualization maps are presented in 
Figs.~\ref{fig:mets_t1_t2_qualitative}--\ref{fig:gli_qualitative}.

The strong AS performance of
CS3F comes from two complementary factors. First, large-scale visual
foundation models provide general-purpose features that can encode rich and localized appearance cues. Second, brain MRI volumes
provide a favorable setting for the Doppelg\"{a}nger Assumption: normal
anatomical structures recur consistently across subjects, whereas
pathological structures are poorly matched. By exploiting these properties, CS3F outperforms reconstruction-based UAD baselines across anomaly-segmentation metrics on the metastases benchmarks, except for voxel-level AP in glioma and stroke, where the diffusion-based pDDPM achieves higher results.

CS3F also achieves superior AS results over weakly performing CLIP-based ZSAD across
all benchmarks. CLIP-based methods compute anomaly scores independently on 2D slices via image--text alignment and then assemble the resulting maps into a 3D volume. This slice-wise strategy frequently produces fragmented and inconsistent segmentations, as evident in Figs.~\ref{fig:mets_t1_t2_qualitative}--\ref{fig:gli_qualitative}. In contrast, CS3F forms local volumetric tokens before anomaly scoring, so
each score is defined from a 3D neighborhood rather than from independently
stitched 2D slice responses in slice-wise approaches, leading to more spatially coherent predictions.

Nevertheless, because CS3F relies on foundation-model features without any adaptation, its performance on low-contrast or highly diffuse lesions is constrained by the backbone's ability to encode them as separable outliers. This limitation is illustrated by Subject~3 in Fig.~\ref{fig:mets_t1_t2_qualitative}. On BraTS-METS T1w, CS3F performs poorly because the annotated metastases have low contrast and blend with adjacent brain tissue. On the corresponding T2w slice of the same subject, the lesion is more conspicuous and thus is accurately segmented. This paired example indicates that the effectiveness of CS3F depends on both the visibility of the anomaly in the input modality and the foundation model's ability to encode it as a separable feature-space outlier. To further examine the role of the encoder, Appendix~\ref{app:backbone_robustness} evaluates CS3F with different 2D ViT backbones, including DINOv3~\citep{dinov3}, CLIP ViT-L/14, and the medical-domain pretrained PMC-CLIP~\citep{lin2023pmc}.

Taken together, although there are still limitations, these results strengthen the evidence that 2D foundation models can be effectively adapted for volumetric medical imaging tasks. The present work extends recent advances in training-free volumetric classification~\citep{an2025raptor} to the more challenging problem of anomaly segmentation. However, standard voxel-level metrics such as AUROC, AP, and Dice are dominated by large lesions and therefore do not fully reveal performance on small- and medium-sized abnormalities. We therefore conduct a dedicated lesion-wise analysis in the following section to examine how CS3F variants detect lesions of varying sizes under a fixed specificity constraint.

\begin{table*}[t]
	\centering
	\caption{Voxel-level anomaly segmentation results. Values are reported as
		percentages. \textbf{Bold} indicates the best zero-shot result, and
		\underline{underline} indicates the best overall result.}
	\label{tab:as_results}
	\setlength{\tabcolsep}{3.0pt}
	
	\begin{minipage}{0.49\textwidth}
		\centering
		\textbf{BraTS-METS T2w}\\[0.5ex]
		\begin{tabular}{llcccc}
			\toprule
			Method & Train & AUROC & AP & Dice & IoU \\
			\midrule
			3D DAE & IXI T2w& 89.6 & 30.5 & 25.0 & 17.3 \\
			pDDPM & IXI T2w& 93.2 & 59.2 & 35.6 & 26.4 \\
			AnomalyCLIP & MedMix & 83.5 & 9.0 & 13.0 & 7.6 \\
			APRIL-GAN & MedMix & 74.1 & 7.0 & 16.1 & 9.8 \\
			WinCLIP & -- & 64.1 & 1.9 & 8.7 & 4.8 \\
			\hdashline \\[-1ex]
			CS3F-F & -- & 97.7 & 60.0 & 39.2 & 29.3 \\
			CS3F-C & -- & \underline{\textbf{98.3}} & \underline{\textbf{68.4}} & 41.2 & 31.9 \\
			CS3F-MS & -- & 98.2 & 67.1 & \underline{\textbf{41.6}} & \underline{\textbf{32.1}} \\
			\bottomrule
		\end{tabular}
	\end{minipage}
	\hfill
	\begin{minipage}{0.49\textwidth}
		\centering
		\textbf{BraTS-METS T1w}\\[0.5ex]
		\begin{tabular}{llcccc}
			\toprule
			Method & Train & AUROC & AP & Dice & IoU \\
			\midrule
			3D DAE & IXI T1w& 69.4 & 3.4 & 7.6 & 4.3 \\
			pDDPM & IXI T1w& 73.3 & 7.6 & 10.6 & 6.2 \\
			AnomalyCLIP & MedMix & 82.3 & 9.7 & 10.6 & 6.1 \\
			APRIL-GAN & MedMix & 64.7 & 4.5 & 11.3 & 6.6 \\
			WinCLIP & -- & 44.9 & 1.3 & 7.0 & 3.8 \\
			\hdashline \\[-1ex]
			CS3F-F & -- & 93.0 & 25.1 & 21.8 & 13.9 \\
			CS3F-C & -- & \underline{\textbf{95.3}} & \underline{\textbf{38.4}} & \underline{\textbf{28.7}} & \underline{\textbf{19.8}} \\
			CS3F-MS & -- & 94.6 & 33.1 & 26.2 & 17.6 \\
			\bottomrule
		\end{tabular}
	\end{minipage}
	
	\vspace{2.ex}
	
	\begin{minipage}{0.49\textwidth}
		\centering
		\textbf{BraTS-GLI T2w}\\[0.5ex]
		\begin{tabular}{llcccc}
			\toprule
			Method & Train & AUROC & AP & Dice  & IoU \\
			\midrule
			3D DAE & IXI T2w& 91.9 & 49.7 & 49.1 & 35.6 \\
			pDDPM & IXI T2w& 92.1 & \underline{63.9} & 58.8 & 45.0 \\
			AnomalyCLIP & MedMix & 80.7 & 16.0 & 25.2 & 15.1 \\
			APRIL-GAN & MedMix & 75.0 & 15.3 & 30.8 & 19.3 \\
			WinCLIP & -- & 44.9 & 1.3 & 7.0 & 3.8 \\
			\hdashline \\[-1ex]
			CS3F-F & -- & 96.7 & 57.9 & 57.7 & 44.1 \\
			CS3F-C & -- & \underline{\textbf{97.4}} & \textbf{62.4} & \underline{\textbf{61.1}} & \underline{\textbf{47.9}} \\
			CS3F-MS & -- & 97.2 & 61.8 & 60.7 & 47.4 \\
			\bottomrule
		\end{tabular}
	\end{minipage}
	\hfill
	\begin{minipage}{0.49\textwidth}
		\centering
		\textbf{ATLAS R2.0 T1w}\\[0.5ex]
		\begin{tabular}{llcccc}
			\toprule
			Method & Train & AUROC & AP & Dice  & IoU \\
			\midrule
			3D DAE & IXI T1w& 86.7 & 12.3 & 13.1 & 7.5 \\
			pDDPM & IXI T1w& 89.6 & \underline{37.8} & 20.0 & 13.3 \\
			AnomalyCLIP & MedMix & 69.2 & 2.1 & 4.2 & 2.2 \\
			APRIL-GAN & MedMix & 71.5 & 2.1 & 8.7 & 4.8 \\
			WinCLIP & -- & 71.7 & 1.4 & 4.1 & 2.2 \\
			\hdashline \\[-1ex]
			CS3F-F & -- & 96.0 & \textbf{29.9} & 30.3 & 20.1 \\
			CS3F-C & -- & 96.0 & 26.4 & 30.2 & 20.5 \\
			CS3F-MS & -- & \underline{\textbf{96.2}} & 28.9 & \underline{\textbf{31.5}} & \underline{\textbf{21.4}} \\
			\bottomrule
		\end{tabular}
	\end{minipage}
\end{table*}

\subsection{Analysis of Multi-Scale Tokenization}
\label{sec:c2f_analysis}
This section evaluates the lesion-wise true positive rate (LTPR) at a fixed true-negative rate (TNR) for
CS3F-C, CS3F-F, and CS3F-MS. The analysis focuses on whether fine-scale C2F
matching improves the detection of small- and medium-sized lesions, and how
much additional computation is required to obtain this benefit.

\subsubsection{Lesion-wise sensitivity under fixed specificity}
\label{sec:multiscale_ltpr}
Global voxel-level metrics such as the Dice score are dominated by large lesions and can obscure method performance on smaller abnormalities. To address this, we report LTPR at a fixed 99\% TNR (LTPR\(_{99}\)), corresponding to a threshold at which at most 1\% of all normal foreground voxels in test datasets are labeled anomalous. This fixed specificity budget enables a fair, clinically relevant comparison of lesion-detection performance across CS3F variants.

Following the ISBI MS lesion segmentation evaluation protocol~\citep{CARASS201777}, we define lesions as 18-connected 3D connected components. All annotated lesions with a diameter smaller than 3 mm are excluded from the lesion evaluation. A ground-truth lesion is considered detected if at least one predicted positive component overlaps with it. Lesions are stratified by volume into small ($<100$ mm$^3$), medium ($100$--$1000$ mm$^3$), and large ($>1000$ mm$^3$) groups, following previous work~\citep{SHANG202690}. This yields 58/66/76 small/medium/large lesions for BraTS-METS, 24/19/74 for BraTS-GLI, and 52/59/66 for ATLAS, respectively. In addition to LTPR at 99\% TNR, we report the Dice score at the same operating point (Dice$_{99}$) and the average number of false-positive connected components per volume across all test volumes (FP/V). The FP/V metric is clinically relevant because high lesion detection sensitivity is only useful in practice when the number of spurious findings remains manageable for radiologist review.

The results are presented in Table~\ref{tab:c2f_ltpr}. As discussed in Sec.~\ref{sec:attenuation}, coarse tokens average features over a larger volumetric
support, which improves the cross-subject matching  between tokens but
attenuates focal lesion signals. Fine tokens use a smaller pooling kernel,
thereby preserving more localized anomalous cues. Table~\ref{tab:c2f_ltpr} demonstrates this effect most clearly on ATLAS-T1w, where CS3F-F increases small-lesion LTPR from 11.5\% to 34.6\% ($3.0\times$) and medium-lesion LTPR from 40.7\% to 74.6\% ($1.8\times$). A comparable improvement appears on BraTS-METS T1w, with small-lesion LTPR rising from 6.9\% to 17.2\% and medium-lesion LTPR doubling from 15.2\% to 30.3\%.

This gain in lesion-wise sensitivity, however, comes at the cost of a higher
false-positive burden. On ATLAS-T1w, CS3F-F increases FP/V from 2.7 to
11.9, while on BraTS-METS T1w it increases FP/V from 2.1 to 7.8. This
reflects the opposite side of the fine-grained tokenization trade-off:
reducing the pooling support also reduces contextual aggregation. As a
result, the output tokens become more sensitive to local anatomical
variation, imperfect registration, and heterogeneous normal tissue
appearance.  This trade-off is visible in the qualitative results. Subjects~2 and~3 in Fig.~\ref{fig:atlas_qualitative} show that CS3F-F can recover small lesion components that are omitted by the coarse variant. At the same time, Fig.~\ref{fig:mets_t1_t2_qualitative} shows that the fine variant is accompanied by more scattered false-positive predictions than the coarse variant, especially for the T1w modality of BraTS-METS.

The multi-scale CS3F-MS provides a compromise between these two regimes. On ATLAS-T1w, CS3F-MS improves
medium-lesion LTPR over CS3F-C from 40.7\% to 55.9\%, while substantially reducing FP/V
from 11.9 for CS3F-F to 4.2. On BraTS-GLI T2w, CS3F-MS improves
medium-lesion LTPR from 5.3\% to 15.8\%, with only a moderate FP/V
increase from 1.5 to 2.0. Therefore, multi-scale scoring is introduced as a controlled trade-off: it improves lesion-wise sensitivity
relative to coarse scoring, but does not fully inherit the false-positive
burden of fine-only scoring.

The benefit of adding fine tokens also depends on modality and pathology.
Fine tokenization is most effective when the backbone encoder produces separable lesion representations, as characterized by $\Delta_0$ in
Lemma~\ref{lem:pooled_token_sensitivity}. This explains why the
performance on BraTS-METS T2w remains weak in absolute terms. When lesion and normal tissues are encoded similarly by the foundation models, reducing the pooling kernel can increase lesion occupancy within each token, but it cannot completely overcome the lack of a strong discriminative signal. To further examine this interpretation, we analyze in Appendix~\ref{app:lesion_separability} the classification difficulties of small- and medium-sized lesion features against normal-tissue features across modalities (including T1w, post-contrast T1c and T2w) and pathologies. Collectively, these results indicate that C2F improves lesion signals in CS3F's tokenization process, but its benefit is limited by the lesion sensitivity of the foundation model.

\begin{table}[t]
	\centering
	\caption{Lesion-wise analysis at 99\% true negative rate. S-, M-, and
		L-LTPR\(_{99}\) denote lesion-wise true positive rates for small, medium, and
		large lesions, respectively. FP/V denotes the average number of false-positive connected
		components per volume. Values for LTPR and Dice\(_{99}\) are reported as
		percentages.}
	\label{tab:c2f_ltpr}
	\scriptsize
	\setlength{\tabcolsep}{2.9pt}
	\renewcommand{\arraystretch}{0.95}
	\begin{tabular}{llccccc}
		\toprule
		Dataset & Method & S-LTPR\(_{99}\) & M-LTPR\(_{99}\) & L-LTPR\(_{99}\) & Dice\(_{99}\) & FP/V \\
		\midrule
		\multirow{3}{*}{METS-T2w}
		& CS3F-F  & 10.3 & 9.1  & 86.8 & 39.2 & 10.2 \\
		& CS3F-C  & 6.9  & 1.5  & 79.0 & 41.2 & 2.6  \\
		& CS3F-MS & 6.9  & 4.6  & 81.6 & 41.6 & 3.9  \\
		\midrule
		\multirow{3}{*}{METS-T1w}
		& CS3F-F  & 17.2 & 30.3 & 81.6 & 19.2 & 7.8 \\
		& CS3F-C  & 6.9  & 15.2 & 75.0 & 27.3 & 2.1 \\
		& CS3F-MS & 12.1 & 19.7 & 75.0 & 24.4 & 3.3 \\
		\midrule
		\multirow{3}{*}{GLI-T2w}
		& CS3F-F  & 8.3  & 21.1 & 96.0 & 53.8 & 5.4 \\
		& CS3F-C  & 8.3  & 5.3  & 96.0 & 56.9 & 1.5 \\
		& CS3F-MS & 8.3  & 15.8 & 94.6 & 56.9 & 2.0 \\
		\midrule
		\multirow{3}{*}{ATLAS-T1w}
		& CS3F-F  & 34.6 & 74.6 & 98.5 & 30.1 & 11.9 \\
		& CS3F-C  & 11.5 & 40.7 & 95.5 & 30.0 & 2.7  \\
		& CS3F-MS & 15.4 & 55.9 & 98.5 & 31.5 & 4.2  \\
		\bottomrule
	\end{tabular}
\end{table}

\subsubsection{Computational overhead of multi-scale scoring}
\label{sec:multiscale_efficiency}

Table~\ref{tab:multiscale_efficiency} reports runtime and memory statistics for the full test batch of 180 volumes, with \(L=16\) nearest coarse tokens used for C2F searching. Tokenization (includes batch preparation, encoder forward passes, and volumetric token construction) dominates the runtime for both variants, accounting for nearly 85\% of the total time in the coarse setting. CS3F-C processes each volume in 3.2\,s with a peak VRAM of 5.5\,GB, making it well-suited for consumer-grade GPUs. Adding multi-scale scoring in CS3F-MS increases this to  5.5\,s per volume and 8.4\,GB VRAM. The primary contributor to the added cost is the fine-token matching stage, where the cross-subject scoring time for the full batch rises more than sixfold, from 60.7\,s to 398.9\,s.

Importantly, the C2F routing keeps the CS3F-MS overhead moderate compared with a naive exhaustive fine-scale search. By leveraging coarse tokens for efficient anatomical routing, CS3F-MS can achieve fine-grained lesion sensitivity that is unattainable with CS3F-C, without the prohibitive quadratic cost of full fine-scale matching. In practice, CS3F-C serves as the efficient default configuration, particularly under tight throughput or memory constraints, while CS3F-MS is recommended when the additional focal-lesion sensitivity justifies the increase in computational overhead.

\begin{table}[t]
	\centering
	\caption{Computational overhead of multi-scale scoring for a batch of
		180 volumes. Tokenization includes batch preparation for 3D
		processing, 2D encoder inference, and
		volumetric token construction. Scoring refers exclusively to the cross-subject scoring stage. Peak VRAM denotes maximum reserved GPU
		memory.}
	\label{tab:multiscale_efficiency}
	\scriptsize
	\setlength{\tabcolsep}{3.2pt}
	\begin{tabular}{lccccc}
		\toprule
		Method & Total & s/vol. & Tokenization & Scoring & Peak VRAM \\
		\midrule
		CS3F-C  & 575.8 & 3.2 & 486.0 & 60.7  & 5.5 GB \\
		CS3F-MS & 926.1 & 5.2 & 501.1 & 398.9 & 8.4 GB \\
		\bottomrule
	\end{tabular}
\end{table}

\subsection{Cross-Organ Validation on Lung CT}
\label{sec:lung_ct_validation}

To examine whether CS3F generalizes beyond brain MRI, we evaluate it on
lung CT. This setting differs from the preceding experiments in several
important aspects: the target organ is different, the imaging modality is CT
rather than MRI, the volumes are not standardized by atlas registration, and
the processed inputs are non-cubic. Lung CT therefore provides a meaningful
test of whether axis-wise volumetric scoring and cross-subject matching remain
effective outside atlas-aligned brain MRI. For the baselines, AnomalyCLIP and APRIL-GAN are applied slice-wise across all three anatomical axes, separately to the left and right segmented lung regions using the same masks and partitioning as CS3F.

As reported in Table~\ref{tab:lung_ct_results} and illustrated
qualitatively in Fig.~\ref{fig:lung_ct_qualitative}, CS3F operates
effectively in this setting without structural modifications. Consistent
with the brain MRI experiments, CS3F outperforms the CLIP-based zero-shot
baselines on lung CT in both patient-level detection and voxel-level
localization metrics. Among the variants, the efficient CS3F-C provides the
strongest patient-level detection and voxel-level AP, while the multi-scale
CS3F-MS achieves the best segmentation overlap, with a Dice score of
$33.0\%$. Additionally, we evaluate patient-wise lesion localization by
checking whether the thresholded anomaly map overlaps with at least one
annotated tumor voxel, following the criterion used in~\cite{kim20243d}. At $99\%$ TNR, CS3F-F successfully detects cancer regions in
$56$ out of $62$ patients ($90.3\%$), while CS3F-MS and CS3F-C succeed in $53$
($85.5\%$) and $49$ ($79.0\%$) cases, respectively. For reference, VMPR-UAD~\citep{kim20243d},
a specialized method for lung CT trained on 4,000 internal healthy CT samples,
reports approximately $93\%$ localization on MSD Lung. Although this is not a
direct comparison, this reference point shows that the zero-shot CS3F
framework achieves meaningful lung cancer localization in lung CT without
lung CT-specific training.

\begin{table}[t]
	\centering
	\caption{Cross-organ validation on lung CT. Values are reported as percentages.  \textbf{Bold} indicates the best zero-shot result.}
	\label{tab:lung_ct_results}
	\scriptsize
	\setlength{\tabcolsep}{2.8pt}
	\begin{tabular}{lccccccc}
		\toprule
		\multirow{2}{*}{Method}
		& \multicolumn{3}{c}{Patient-level}
		& \multicolumn{4}{c}{Voxel-level} \\
		\cmidrule(lr){2-4}
		\cmidrule(lr){5-8}
		& AUROC & AP & F1
		& AUROC & AP & Dice & IoU \\
		\midrule
		APRIL-GAN   & 61.5 & 64.7 & 70.0 & 90.1 & 2.8  & 9.7  & 5.9  \\
		AnomalyCLIP & 73.6 & 73.3 & 73.8 & 91.6 & 7.8  & 9.7  & 5.8  \\
		\hdashline \\ [-1ex]
		CS3F-F      & 78.4 & {73.9} & 80.0 & \textbf{99.0} & {32.4} & 32.9 & 23.3 \\
		CS3F-C      & \textbf{80.6} & \textbf{74.7} & \textbf{81.3} & 98.9 & \textbf{35.3} & 31.7 & 22.6 \\
		CS3F-MS     & 80.2 & \textbf{74.7} & {81.1} & \textbf{99.0} & 34.5 & \textbf{33.0} & \textbf{23.8} \\
		\bottomrule
	\end{tabular}
\end{table}

Overall, the lung CT results confirm the cross-organ applicability of CS3F.
The method remains functional under a different organ, a different imaging
modality, non-atlas-based preprocessing, and non-cubic 3D inputs --- all
conditions handled natively by the axis-wise scoring design without
modification.

\section{Ablation Study}
\subsection{Robustness to Batch Size}
CS3F estimates anomaly scores from cross-subject statistics; thus, its reliability depends on the number of test cases processed together. To study this, we divide the BraTS-METS T1w test set (180 volumes) into non-overlapping equal-sized chunks and process each chunk independently. In this way, anomaly scores are computed only within each chunk. This setup also mirrors a practical clinical workflow, where cases arrive and are analyzed periodically (e.g., over daily or weekly acquisition windows) rather than being accumulated into a single large cohort.

Table~\ref{tab:ablation_batch_chunking} shows that performance degrades smoothly as batch size decreases. From 2 to 6 chunks (corresponding to batch sizes of 90 down to 30 volumes), CS3F-C remains reliable, with patient-level AUROC above 95.9\%, voxel AP above 34.3\%, and Dice above 25.1\%. These results indicate that batches of 30 or more cases are minimal for any practical purposes. In the extreme case of 15 samples per batch, patient-level AUROC and Dice remain relatively stable, but voxel AP drops sharply from 34.3\% to 21.3\%. This suggests that voxel-wise ranking is more sensitive to the reduced diversity of reference samples. Overall, the results demonstrate that while larger batches are beneficial, reliable operation does not require processing the entire cohort at once.

\begin{table}[t]
	\centering
	\scriptsize
	\caption{Robustness to batch size on BraTS-METS T1w. The test set of 180 volumes is divided into non-overlapping chunks, and each chunk is processed independently. Results are reported as mean $\pm$ standard deviation over five random partitions.}
	\label{tab:ablation_batch_chunking}
	\begin{tabular}{ccccc}
		\toprule
		{\# Chunks} & {Batch size} & {P-AUROC} & {V-AP} & Dice \\
		\midrule
		1 & 180 & 97.0 & 38.4 &28.7 \\
		2  & 90 & $96.9 \pm 0.2$ & $38.3 \pm 1.4$ & $28.2 \pm 0.4$ \\
		4  & 45 & $96.8 \pm 0.4$ & $37.2 \pm 2.1$ & $27.9 \pm 0.8$ \\
		6  & 30 & $95.9 \pm 0.4$ & $34.3 \pm 2.7$ & $25.1 \pm 1.3$ \\
		12 & 15 & $95.4 \pm 0.8$ & $21.3 \pm 1.2$ & $23.4 \pm 1.5$ \\
		\bottomrule
	\end{tabular}
\end{table}

\subsection{Effect of Multi-Axis Score Fusion}

To assess the role of multi-axis context, we compare CS3F-C using individual anatomical axes and their combinations on BraTS-METS T1w. As shown in Table~\ref{tab:ablation_multiaxis}, the main advantage of using multiple axes is observed at voxel-level segmentation. Combining multiple axes recovers complementary spatial information and improves voxel-level AP and Dice over single-axis variants. For example, axial and coronal scoring achieve similar voxel AP values of $28.4\%$ and $28.0\%$, respectively, whereas their fusion increases voxel AP to $37.3\%$. This near $9$ percentage-point improvement indicates that different anatomical planes capture partially distinct anomalous signals that may not be visible or well localized from a single orientation. The full tri-axial configuration achieves the best voxel AP and Dice scores, reaching $38.4\%$ and $28.7\%$, respectively. These results support multi-axis score fusion as a simple and effective design for improving spatial localization.

\begin{table}[t]
	\centering
	\scriptsize
	\caption{Ablation of multi-axis score fusion for CS3F-C on BraTS-METS T1w. P-AUROC denotes patient-level AUROC, and V-AP denotes voxel-level average precision. Values are reported as percentages.}
	\label{tab:ablation_multiaxis}
	\begin{tabular}{lccc}
		\toprule
		{Viewpoint} & {P-AUROC} & {V-AP} & Dice \\
		\midrule
		A       & {98.7} & 28.4 & 23.5 \\
		C       & 94.1 & 28.0 & 23.7 \\
		S       & 93.6 & 18.8 & 20.5 \\
		A + S   & 97.7 & 32.6 & 27.2 \\
		A + C   & 97.7 & 37.3 & 27.5 \\
		C + S   & 94.2 & 31.0 & 25.9 \\
		A + C + S & 97.0 & {38.4} & {28.7} \\
		\bottomrule
	\end{tabular}
\end{table}

\subsection{Fusion Stage: Feature-Level Fusion versus Score-Level Fusion}
\label{sec:ablation_fusion_stage}
We compare score-level fusion with the feature-level fusion strategy used in \cite{gia2026trainingfree}. Feature-level fusion performs cross-subject matching after concatenating tokens from the axial, coronal, and sagittal views. This early fusion can reduce noisy matches in normal regions because each nearest-neighbor match must be jointly consistent across three anatomical axes. This explains its slightly stronger voxel-level performance on BraTS-METS T1w, as shown in Table~\ref{tab:ablation_score_fusion}.

Nevertheless, feature-level fusion has two important practical limitations. It increases peak memory usage from \(5.4\) GB to \(9.2\) GB and requires compatible token grids across the three axes, restricting the design to cubic volumes. Score-level fusion protocol in CS3F avoids these constraints by scoring each axis independently and fusing the resulting anomaly maps in voxel space, as illustrated in Figure~\ref{fig:cs3f_overall_pipeline}. Therefore, score-level fusion is adopted as the default design because it is more memory-efficient and supports non-cubic volumes and multi-scale C2F scoring.
\begin{table}[t]
	\centering
	\scriptsize
	\caption{Comparison between early fusion and score fusion for CS3F-C on BraTS-METS T1w. P-AUROC denotes patient-level AUROC, and V-AP denotes voxel-level average precision.}
	\label{tab:ablation_score_fusion}
	\begin{tabular}{lccccc}
		\toprule
		{Fusion design} & {P-AUROC} & {V-AP} & {Dice} & {VRAM} & {Time/vol.} \\
		\midrule
		Early fusion & {97.3} & {40.8} & {30.0} & 9.2 GB & {3.1 s} \\
		Score fusion & 97.0 & 38.4 & 28.7 & {5.4 GB} & 3.2 s \\
		\bottomrule
	\end{tabular}
\end{table}

\subsection{Sensitivity to the projection dimension}
\label{sec:projection_dim}
\begin{figure}[!t]
	\centering
	\includegraphics[width=\linewidth]{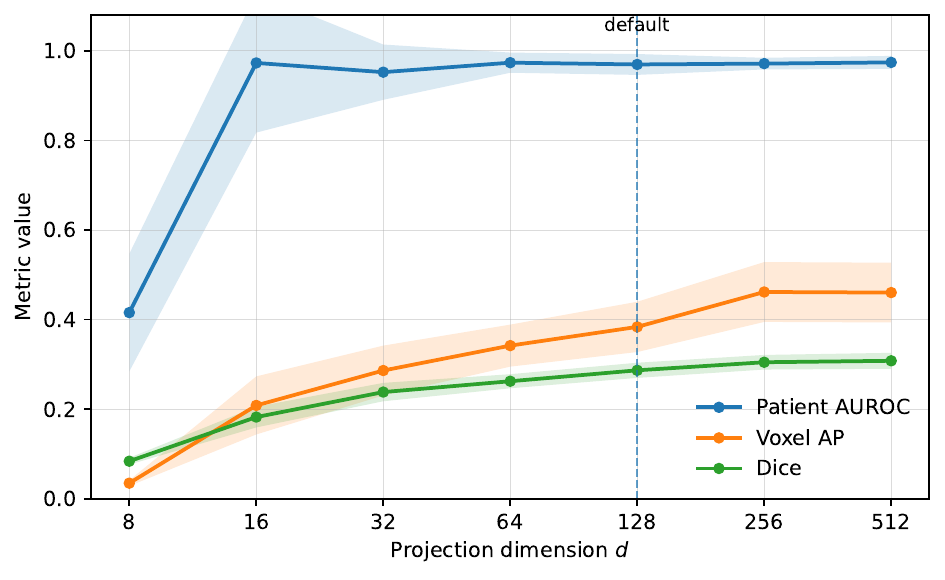}
	\caption{
		Effect of the random projection dimension \(d\). Shaded regions denote
		sample standard deviation across 5 random seeds.
	}
	\label{fig:ablation_projection_dim}
\end{figure}
We ablate the random projection dimension \(d\) in
Fig.~\ref{fig:ablation_projection_dim}. Very small dimensions (e.g., $d=8$) substantially degrade both patient-level detection and voxel-level
anomaly segmentation, suggesting that excessive compression can distort anomaly
signals and pairwise token distances. Performance becomes stable once
\(d \geq 64\): patient-level AUROC saturates early, while voxel-level AP and Dice
continue to improve moderately with larger \(d\). In practice, \(d\) should be chosen as large as the available hardware
allows. Nevertheless, moderate
dimensions such as \(d=128\) remain strong efficiency-oriented choices,
providing stable performance while reducing the memory and computational
cost of pairwise matching.

\subsection{Sensitivity to the C2F Routing Budget}
\label{sec:ablation_c2f_L}

We analyze the sensitivity of  the multi-scale CS3F-MS to the C2F routing budget \(L\), which controls the number of coarse neighbors used for restricted fine-scale matching. As shown in Table~\ref{tab:ablation_c2f_L}, increasing \(L\) only slightly improves V-AP and Dice, while the processing time increases rapidly. In particular, using a very large budget such as \(L=256\) substantially increases the time per volume but provides only marginal gains compared to moderate values of \(L\). The lesion-level TPR at \(99\%\) TNR also does not improve with larger \(L\), indicating that larger routing budgets are not cost-effective for high-specificity lesion detection. Overall, moderate values of \(L\) are sufficient for C2F.
\begin{table}[t]
	\centering
	\caption{
		Sensitivity of CS3F-MS to the C2F routing budget \(L\) on BraTS-METS T1w.
		All metrics are reported as percentages except time per sample and FP/V.
		FP/V denotes the number of false positive lesions per volume at \(99\%\) TNR.
	}
	\scriptsize
	\label{tab:ablation_c2f_L}
	\begin{tabular}{ccccccc}
		\toprule
		\(L\) 
		& {s/vol.}
		& {P-AUROC}
		& {V-AP}
		& Dice
		& LTPR\(_{99}\)
		& {FP/V} \\
		\midrule
		1   & 3.4 s  & 97.2 & 31.9 & 25.8 & 41.0 & 3.4 \\
		4   & 3.7 s  & 97.2 & 32.7 & 26.1 & 39.0 & 3.2 \\
		16  & 5.2 s  & 97.0 & 33.1 & 26.2 & 38.5 & 3.3 \\
		64  & 7.4 s  & 96.9 & 33.3 & 26.3 & 38.0 & 3.4 \\
		256 & 21.3 s & 96.8 & 33.5 & 26.3 & 38.0 & 3.4 \\
		\bottomrule
	\end{tabular}
\end{table}

\section{Discussion and conclusion}

This work proposes CS3F, a batch-based framework for zero-shot anomaly detection in 3D medical volumes. CS3F combines frozen 2D foundation-model features, multi-axis volumetric tokenization, and axis-specific cross-subject mutual similarity scoring to localize regions whose representations are poorly matched by other subjects in the same batch. The proposed coarse-to-fine routing further allows this framework to reach finer-resolution scoring without the prohibitive cost of exhaustive matching. The framework requires no lesion annotations, no clean normal training datasets, and no pathology-specific adaptation, yet achieves competitive performance across brain MRI benchmarks covering metastases, glioma, and stroke, as well as  lung CT.

This paper also illustrates both the plausibility and limitations of using 2D foundation models for volumetric anomaly localization without fine-tuning or task-specific adaptation. On the positive side, the results show that local features extracted from generic 2D foundation models can retain sufficient anatomical and pathological information to support anomaly scoring in 3D volumes when paired with a suitable pipeline. However, this reliance on unadapted representations also constrains the method. Lesions can only be detected when they induce distinguishable anomaly signals in the extracted features. Small, subtle, or low-contrast abnormalities may remain entangled with normal tissue representations, limiting localization performance even with finer tokenization.

CS3F also inherits limitations from its batch-based cross-subject matching setting. The Doppelg\"{a}nger assumption~\ref{assump:doppelganger} assumes that the processed batch contains sufficient recurrent normal structure for mutual similarity scoring to be meaningful. The method is therefore most naturally suited to cohort-level screening, large-scale dataset curation, and research pipelines, rather than immediate single-subject inference. Moreover, cross-subject matching requires a certain level of spatial or region-level standardization. In this work, brain MRI volumes are registered to a common space, whereas lung CT volumes are analyzed within segmented lung regions. Although the lung CT experiment suggests that CS3F can operate under less standardized preprocessing than brain MRI, its extension to organs with weaker spatial correspondence, larger deformation, or less reliable segmentation remains to be investigated.

Despite these limitations, CS3F represents a practical step toward zero-shot anomaly detection in 3D medical volumes. By eliminating dependence on separately curated healthy-control datasets and manual lesion annotations, the framework offers a practical pathway for cohort-level screening, large-scale dataset curation, and automated quality control—use cases where processing batches of studies is clinically natural. Nevertheless, current performance, particularly for small and low-contrast lesions, has not yet reached the level required for prospective clinical decision support. In future work, we aim to investigate lightweight adaptation strategies for foundation models on auxiliary medical data to improve lesion–normal separability while preserving the zero-shot character of the target task; and to explore multi-modal and multi-sequence fusion that can integrate complementary anomaly evidence across imaging contrasts.

\clearpage
\newcommand{\SegmentationGrid}[1]{%
	\begin{tikzpicture}
		\node[anchor=south west, inner sep=0] (fig) at (0,0)
		{\includegraphics[width=0.92\linewidth]{#1}};
		
		\foreach \x/\name in {
			0.0625/GT,
			0.1875/DAE,
			0.3125/pDDPM,
			0.4375/APRIL-GAN,
			0.5625/AnomalyCLIP,
			0.6875/CS3F-F,
			0.8125/CS3F-C,
			0.9375/CS3F-MS
		} {
			\node[anchor=south, font=\footnotesize\bfseries]
			at ($($(fig.north west)!\x!(fig.north east)$)+(0,2pt)$) {\name};
		}
		
		\foreach \y/\name in {
			0.9333/Subject 1,
			0.6000/Subject 2,
			0.2667/Subject 3
		} {
			\node[anchor=east, rotate=90, font=\footnotesize\bfseries]
			at ($($(fig.south west)!\y!(fig.north west)$)+(-8pt,0)$) {\name};
		}
	\end{tikzpicture}%
}

\begin{figure*}[!t]
	\centering
	
	\begin{subfigure}{\textwidth}
		\centering
		\SegmentationGrid{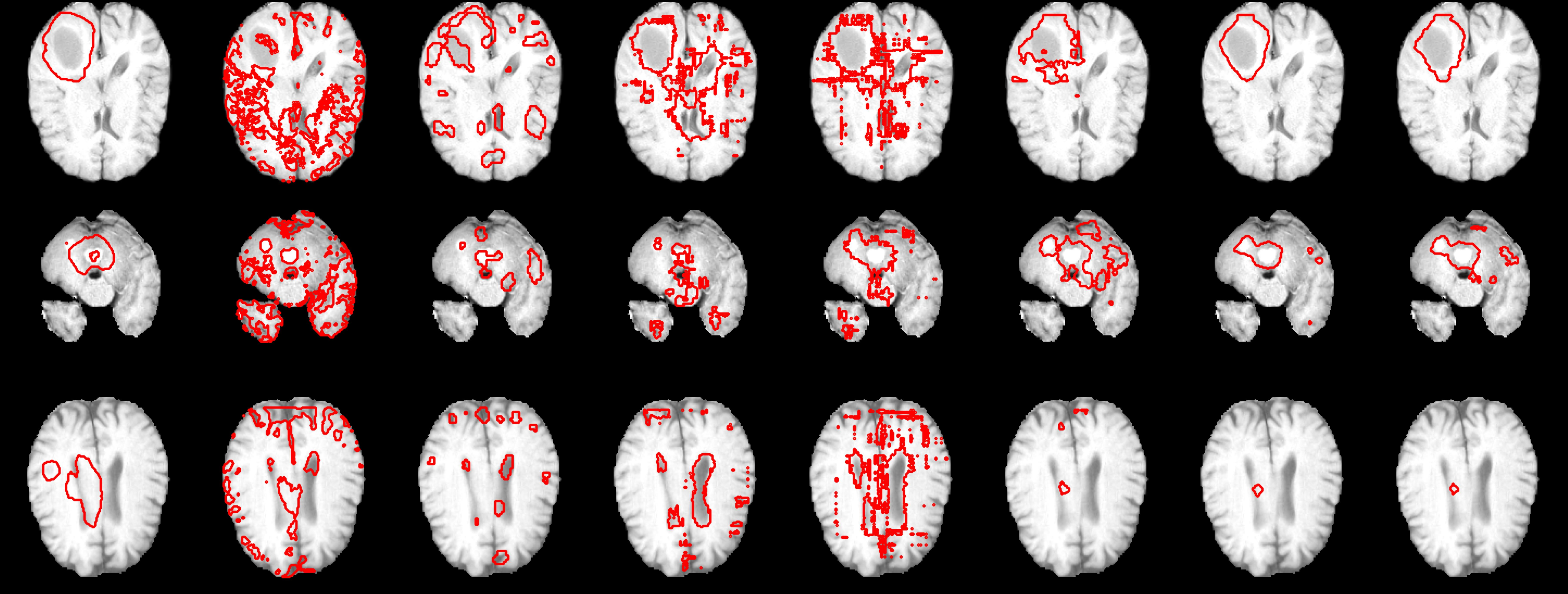}
		\caption{BraTS-METS T1w.}
		\label{fig:mets_t1_qualitative_sub}
	\end{subfigure}
	
	\vspace{0.6em}
	
	\begin{subfigure}{\textwidth}
		\centering
		\SegmentationGrid{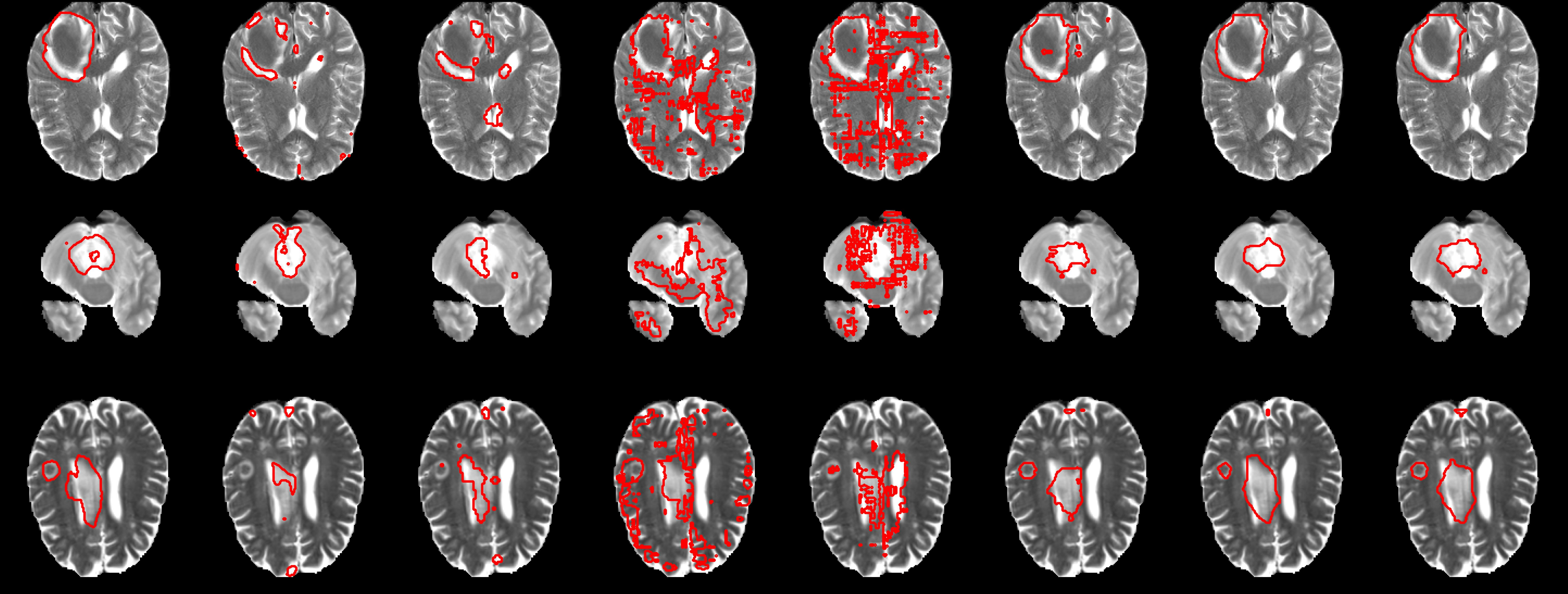}
		\caption{BraTS-METS T2w.}
		\label{fig:mets_t2_qualitative_sub}
	\end{subfigure}
	\caption{
		Qualitative anomaly segmentation results on BraTS-METS using T1w and T2w inputs.
		The same subjects and axial slices are shown for both modalities to enable direct
		comparison of modality-dependent segmentation behavior. Red contours indicate the
		boundaries of the predicted anomaly masks at maximum Dice operating point, overlaid on the original images.
	}
	\label{fig:mets_t1_t2_qualitative}
\end{figure*}

\begin{figure*}[!t]
	\centering
	\begin{tikzpicture}
		\node[anchor=south west, inner sep=0] (fig) at (0,0)
		{\includegraphics[width=0.92\linewidth]{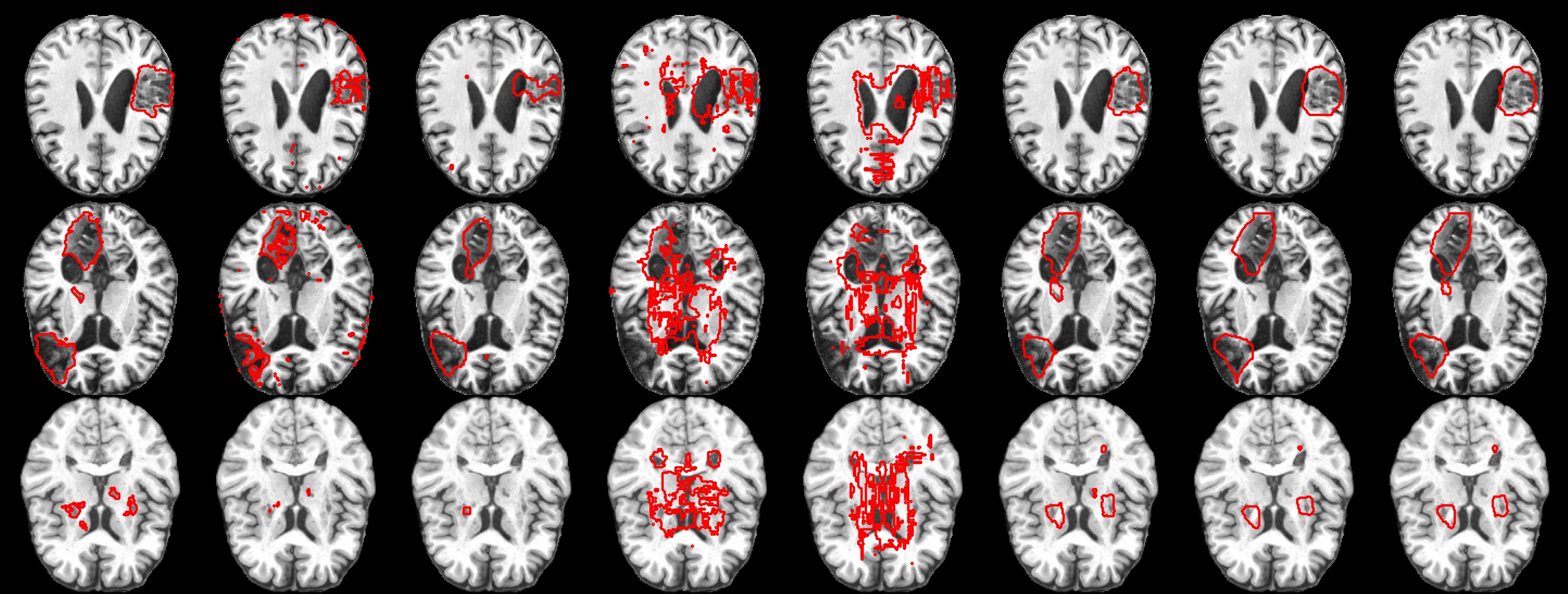}};
		
		\foreach \x/\name in {
			0.0625/GT,
			0.1875/DAE,
			0.3125/pDDPM,
			0.4375/APRIL-GAN,
			0.5625/AnomalyCLIP,
			0.6875/CS3F-F,
			0.8125/CS3F-C,
			0.9375/CS3F-MS
		} {
			\node[anchor=south, font=\footnotesize\bfseries]
			at ($($(fig.north west)!\x!(fig.north east)$)+(0,2pt)$) {\name};
		}
		
		\foreach \y/\name in {
			0.9333/Subject 1,
			0.6000/Subject 2,
			0.2667/Subject 3
		} {
			\node[anchor=east, rotate=90, font=\footnotesize\bfseries]
			at ($($(fig.south west)!\y!(fig.north west)$)+(-8pt,0)$) {\name};
		}
	\end{tikzpicture}
	\caption{ Qualitative anomaly segmentation results on T1w inputs from ATLAS. Red contours indicate the boundaries of the predicted anomaly masks at maximum Dice operating point, overlaid on the original images. The examples illustrate stroke lesion localization across subjects with different lesion extent and appearance. }
	\label{fig:atlas_qualitative}
\end{figure*}

\begin{figure*}[!t]
	\centering
	\begin{tikzpicture}
		\node[anchor=south west, inner sep=0] (fig) at (0,0)
		{\includegraphics[width=0.92\linewidth]{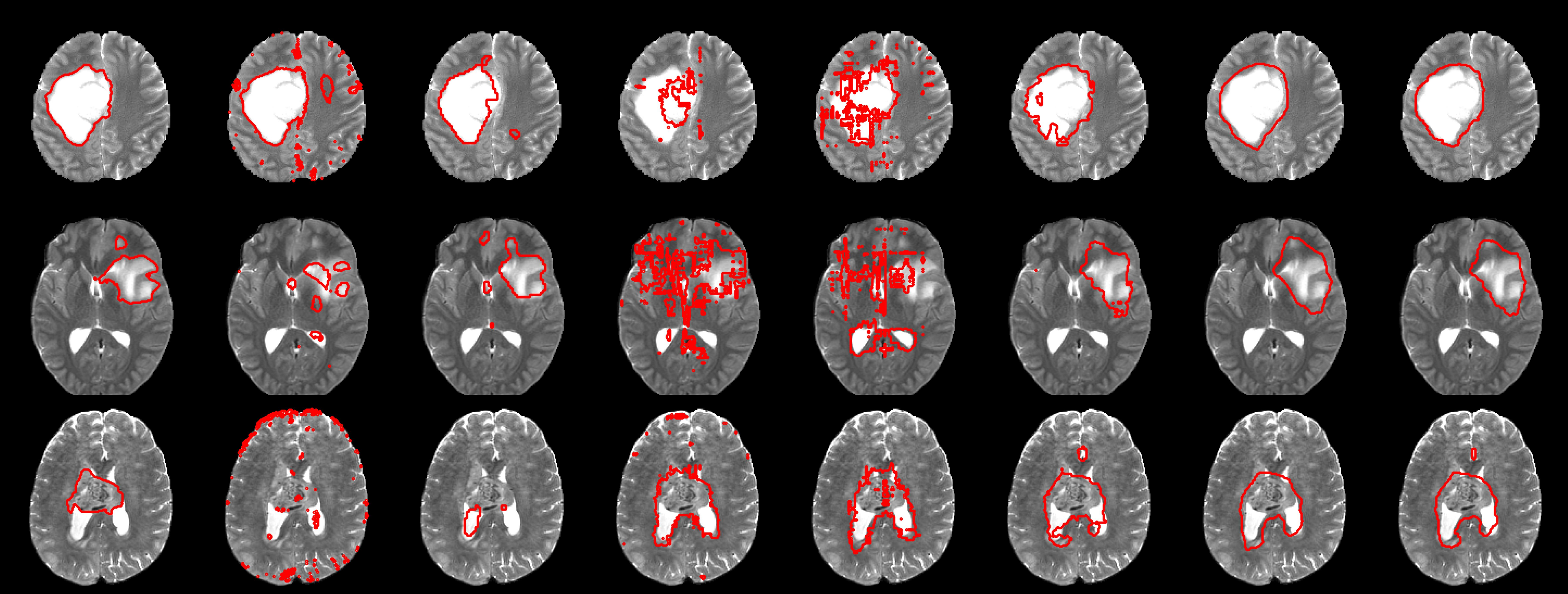}};
		
		\foreach \x/\name in {
			0.0625/GT,
			0.1875/DAE,
			0.3125/pDDPM,
			0.4375/APRIL-GAN,
			0.5625/AnomalyCLIP,
			0.6875/CS3F-F,
			0.8125/CS3F-C,
			0.9375/CS3F-MS
		} {
			\node[anchor=south, font=\footnotesize\bfseries]
			at ($($(fig.north west)!\x!(fig.north east)$)+(0,2pt)$) {\name};
		}
		
		\foreach \y/\name in {
			0.9333/Subject 1,
			0.6000/Subject 2,
			0.2667/Subject 3
		} {
			\node[anchor=east, rotate=90, font=\footnotesize\bfseries]
			at ($($(fig.south west)!\y!(fig.north west)$)+(-8pt,0)$) {\name};
		}
	\end{tikzpicture}
	\caption{ Qualitative anomaly segmentation results on T2w inputs from BraTS-GLI. Red contours indicate the boundaries of the predicted anomaly masks at maximum Dice operating point, overlaid on the original images. The examples illustrate glioma localization across subjects with heterogeneous tumor appearance and spatial extent. }
	\label{fig:gli_qualitative}
\end{figure*}

\begin{figure*}[!t]
	\centering
	\begin{tikzpicture}
		\node[anchor=south west, inner sep=0] (img) at (0,0)
		{\includegraphics[width=\textwidth]{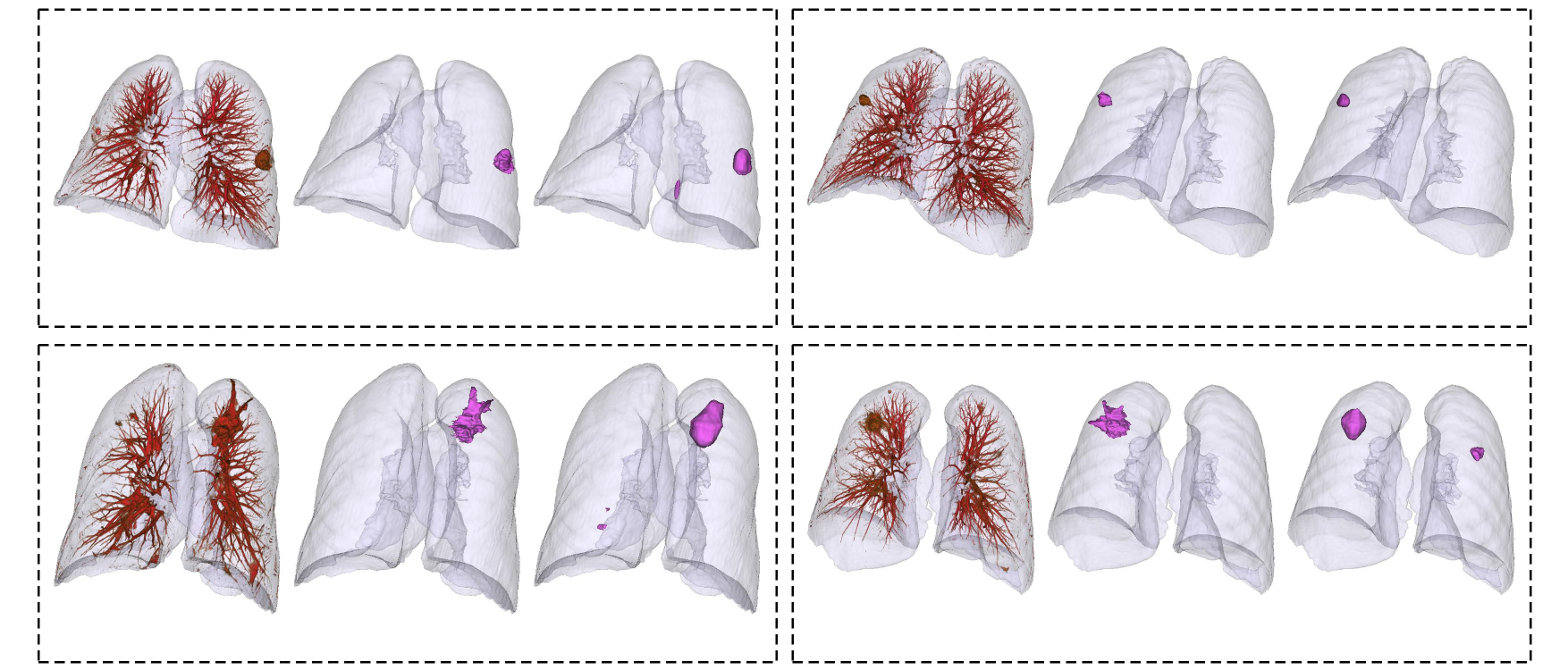}};
		
		\begin{scope}[x={(img.south east)}, y={(img.north west)}]
			
			\node[font=\small\bfseries] at (0.115,0.565) {Segmented lung};
			\node[font=\small\bfseries] at (0.265,0.565) {Ground truth};
			\node[font=\small\bfseries] at (0.415,0.565) {Prediction};
			
			\node[font=\small\bfseries] at (0.595,0.565) {Segmented lung};
			\node[font=\small\bfseries] at (0.745,0.565) {Ground truth};
			\node[font=\small\bfseries] at (0.895,0.565) {Prediction};
			
			\node[font=\small\bfseries] at (0.115,0.075) {Segmented lung};
			\node[font=\small\bfseries] at (0.265,0.075) {Ground truth};
			\node[font=\small\bfseries] at (0.415,0.075) {Prediction};
			
			\node[font=\small\bfseries] at (0.595,0.075) {Segmented lung};
			\node[font=\small\bfseries] at (0.745,0.075) {Ground truth};
			\node[font=\small\bfseries] at (0.895,0.075) {Prediction};
			
		\end{scope}
	\end{tikzpicture}
	\caption{Examples of 3D anomaly segmentation maps generated by CS3F-C.}
	\label{fig:lung_ct_qualitative}
\end{figure*}

\clearpage


	\section*{Declaration of competing interest}
	
	The authors declare that they have no known competing financial interests or personal relationships that could have appeared to influence the work reported in this paper.
	
\section*{Data availability}

All datasets used in this study are publicly available and de-identified. The study used only retrospective public data and did not involve new human-subject data collection.

	\section*{Code availability}
	
	The source code will be made publicly available at \url{https://github.com/DumBringer/CS3F}.
	
	\section*{Acknowledgement}
	The author thanks Prof. Jaehyun Ahn for supervision and academic guidance during the preliminary stage of this work.
	\appendix
	
\section{Classification difficulty of small and medium lesions with generic foundation models}
\label{app:lesion_separability}

This appendix examines whether the weak detection of small and medium
lesions is related to the classification difficulty between lesion
representations and normal-tissue representations. For modality comparison,
we evaluate BraTS-METS T1w, T2w, and  post-contrast T1c. For pathology comparison, we
evaluate ATLAS T1w. We use only the 65 abnormal test cases
from the main experiment and do not include IXI normal samples, since IXI
does not provide T1c scans. CS3F-F is also evaluated on the same 65 abnormal
cases. All features are extracted using the default setting of the main
experiment.

Here, we analyze raw 2D patch tokens obtained by applying the frozen
foundation model (DINOv2) directly to individual axial slices. For each small or medium lesion, we select
one lesion token, defined as the raw patch token with the largest spatial
overlap with that lesion. This avoids artificially increasing similarity by
including multiple correlated tokens from the same lesion. Normal tokens are
selected only from regions containing \(100\%\) normal voxels. For each
abnormal volume, we randomly select five slices that are at least five
slices away from the selected lesion-token locations, and extract all fully
normal tokens from these slices.

Classification difficulty is quantified via the adapted leave-one-out error of a 1-Nearest Neighbor (1-NN) classifier, known as the adapted \(N3\) metric \citep{barella2021assessing}. This metric is well-suited to highly imbalanced settings such as ours. Adapted \(N3\) is computed for the lesion class and represents the fraction of small- and medium-lesion tokens whose nearest neighbor belongs to the normal-token class. Thus, values close to 1 indicate strong local overlap (high difficulty) between lesion and normal-tissue representations, whereas lower values reflect greater distinguishability.

\begin{table}[t]
	\centering
	\scriptsize
	\setlength{\tabcolsep}{3pt}
	\caption{
		Adapted \(N3\) for small- and medium-lesion tokens and corresponding
		lesion-wise sensitivity at 99\% TNR of CS3F-F. Lower \(N3\) indicates easier
		lesion-normal separation. Values are reported as percentages.
	}
	\label{tab:n3_small_medium_lesions}
	\begin{tabular}{lcccc}
		\toprule
		{Metric}
		& {METS T1w}
		& {METS T2w}
		& {METS T1c}
		& {ATLAS T1w} \\
		\midrule
		\(N3\), \(\ell=6\)  & 98.4 & 98.4 & 95.1 & 89.5 \\
		\(N3\), \(\ell=12\) & 97.6 & 97.6 & 81.3 & 70.5 \\
		\(N3\), \(\ell=18\) & 96.7 & 100.0 & 74.8 & 66.7 \\
		\(N3\), \(\ell=24\) & 94.3 & 96.7 & 72.4 & 67.6 \\
		\midrule
		S-LTPR\(_{99}\)              & 5.2  & 1.7  & 17.2 & 15.4 \\
		M-LTPR\(_{99}\)              & 10.6 & 3.0  & 42.4 & 45.8 \\
		Dice & 19.6 & 37.6 & 31.4 & 25.2\\
		\bottomrule
	\end{tabular}
\end{table}

Table~\ref{tab:n3_small_medium_lesions} demonstrates that detection performance on small and medium lesions is strongly governed by the degree of overlap between lesion and normal-tissue tokens in the foundation model's feature space. When adapted \(N3\) remains close to 1, as observed in BraTS-METS T1w and T2w, lesion tokens are locally entangled with normal-tissue representations. Consequently, the anomaly signal is weak, yielding low S-LTPR and M-LTPR. By contrast, BraTS-METS T1c shows lower adapted \(N3\) in deeper layers and correspondingly higher lesion-wise sensitivity, consistent with the established clinical utility of contrast-enhanced T1-weighted imaging for detecting and characterizing small metastases \citep{fink2013imaging,flouri2025longitudinal}. ATLAS T1w confirms this pattern on small- and medium-lesion detection for a different pathology. Overall, these results confirm that small- and medium-lesion detection in CS3F is limited not only by token resolution but also by the foundation model's ability to encode a sufficiently discriminative lesion signal.

\section{Alternate 2D ViT Encoders}
\label{app:backbone_robustness}

We further evaluate CS3F on BraTS-METS T1w using three additional 2D ViT backbones, while keeping all settings the same as in the main experiments and changing only the feature extractor. Specifically, we consider DINOv3, a recent general-purpose visual foundation model \citep{dinov3}; ViT-L/14@224, the image encoder of CLIP and a widely used vision-language foundation model; and PMC-CLIP, a medical vision-language model pretrained on 1.6M biomedical image--text pairs \citep{lin2023pmc}. As shown in Tables~\ref{tab:backbone_main_results} and~\ref{tab:backbone_fixed_spec_results}, CS3F maintains effective patient-level detection and voxel-level localization across different backbone families. The results also suggest that medical-domain pretraining provides stronger local representations for lesion segmentation.

\begin{table}[t]
	\centering
	\caption{Performance of different ViT backbones on BraTS-METS T1w. Values are reported as percentages.}
	\label{tab:backbone_main_results}
	\scriptsize
	\setlength{\tabcolsep}{2.8pt}
	\begin{tabular}{llccccccc}
		\toprule
		\multirow{2}{*}{Backbone}
		& \multirow{2}{*}{Method}
		& \multicolumn{3}{c}{Patient-level}
		& \multicolumn{4}{c}{Voxel-level} \\
		\cmidrule(lr){3-5}
		\cmidrule(lr){6-9}
		& & AUROC & AP & F1
		& AUROC & AP & Dice & IoU \\
		\midrule
		\multirow{3}{*}{DINOv3}
		& CS3F-F  & 98.1 & {96.1} & 91.7 & 93.6 & 25.9 & 22.6 & 14.8 \\
		& CS3F-C  & 98.2 & 94.6 & {94.9} & {96.2} & {45.2} & {30.1} & {21.3} \\
		& CS3F-MS & {98.5} & 95.9 & 94.7 & 95.4 & 37.9 & 27.8 & 19.0 \\
		\midrule
		\multirow{3}{*}{ViT-L/14}
		& CS3F-F  & 96.4 & 89.8 & 89.7 & 89.3 & 17.4 & 14.9 & 9.0 \\
		& CS3F-C  & 97.4 & 92.5 & {93.3} & {93.6} & {27.6} & {20.7} & {13.3} \\
		& CS3F-MS & {97.5} & {92.7} & 91.4 & 92.2 & 23.7 & 18.4 & 11.6 \\
		\midrule
		\multirow{3}{*}{PMC-CLIP}
		& CS3F-F  & 96.3 & 91.2 & 90.1 & 96.1 & 46.6 & 30.7 & 21.4 \\
		& CS3F-C  & {97.6} & {92.8} & 93.2 & {97.1} & {57.6} & {34.8} & {25.4} \\
		& CS3F-MS & 97.4 & 92.5 & {94.0} & 96.8 & 54.3 & 33.6 & 24.3 \\
		\bottomrule
	\end{tabular}
\end{table}

\begin{table}[t]
	\centering
	\caption{Performance of different ViT backbones at $99\%$ specificity on BraTS-METS T1w. Values are reported as percentage, except FP/V.}
	\label{tab:backbone_fixed_spec_results}
	\scriptsize
	\setlength{\tabcolsep}{3.2pt}
	\begin{tabular}{llccccc}
		\toprule
		\multirow{2}{*}{Backbone}
		& \multirow{2}{*}{Method}
		& \multicolumn{3}{c}{Lesion-level}
		& \multicolumn{2}{c}{Voxel-level} \\
		\cmidrule(lr){3-5}
		\cmidrule(lr){6-7}
		& & S-LTPR & M-LTPR & L-LTPR
		& Dice & Avg FP/V \\
		\midrule
		\multirow{3}{*}{DINOv3}
		& CS3F-F  & {19.0} & {24.2} & {79.0} & 21.2 & 8.5 \\
		& CS3F-C  & 10.3 & 12.1 & 67.1 & {29.6} & {1.3} \\
		& CS3F-MS & 13.8 & 15.2 & 68.4 & 26.5 & 2.6 \\
		\midrule
		\multirow{3}{*}{ViT-L/14}
		& CS3F-F  & {15.5} & {28.8} & {84.2} & 14.6 & 13.9 \\
		& CS3F-C  & 6.9 & 9.1 & 73.7 & {19.9} & {3.6} \\
		& CS3F-MS & 8.6 & 16.7 & 79.0 & 17.9 & 5.9 \\
		\midrule
		\multirow{3}{*}{PMC-CLIP}
		& CS3F-F  & {17.2} & {28.8} & {82.9} & 30.7 & 9.4 \\
		& CS3F-C  & 6.9 & 18.2 & 79.0 & {34.5} & {2.8} \\
		& CS3F-MS & 8.6 & 21.2 & 81.6 & 33.5 & 4.0 \\
		\bottomrule
	\end{tabular}
\end{table}

	\bibliographystyle{elsarticle-harv}
	\bibliography{references}
	
\end{document}

%% file: overall_pipeline.tex
\begin{figure*}[t]
	\centering
	
	\definecolor{axialcolor}{RGB}{0,135,95} 
	\definecolor{sagittalcolor}{RGB}{25,85,165}
	\definecolor{coronalcolor}{RGB}{175,78,0}
	\definecolor{dotcolor}{RGB}{100,100,100}
	
	\begin{tikzpicture}[
		x=1cm,
		y=1cm,
		line join=round,
		line cap=round,
		>=Latex,
		font=\sffamily,
		timeline/.style={draw=gray!55, line width=0.9pt},
		timeline merge/.style={draw=gray!55, line width=0.85pt},
		timeline dot/.style={circle, draw=gray!55, fill=gray!55, inner sep=0pt, minimum size=4.1pt},
		axis name/.style={font=\scriptsize, text=gray!40!black, anchor=east},
		step label/.style={font=\scriptsize, text=gray!30!black, align=center, anchor=north},
		pipe arrow/.style={-{Latex[length=2.1mm]}, draw=gray!45!black, line width=0.85pt},
		score arrow/.style={-{Latex[length=2.0mm]}, draw=red!70!black!65, line width=0.8pt},
		mutual score arrow/.style={{Latex[length=2.0mm]}-{Latex[length=2.0mm]}, draw=blue!70!black!65, line width=0.8pt},
		token/.style={draw=blue!80!black!70, fill=blue!80!black!70, line width=0.45pt, minimum width=0.16cm, minimum height=0.12cm, inner sep=0pt},
		tokenA/.style={token, left color=orange!85!yellow, right color=red!80!black, draw=red!70!black!70},
		tokenB/.style={token, top color=yellow!90, bottom color=orange!85!red, draw=orange!75!black},
		tokenC/.style={token, left color=cyan!65, right color=blue!80!black!70, draw=cyan!55!black},
		tokenD/.style={token, top color=red!55!orange, bottom color=red!85!black, draw=red!80!black},
		bracket/.style={draw=gray!45!black, line width=0.5pt},
		volume edge/.style={draw=gray!35!black, line width=0.55pt},
		coarse/.style={draw=blue!80!black!70, fill=blue!35, fill opacity=0.38, line width=0.75pt},
		fine/.style={draw=red!70!black!60, fill=orange!45!red!35, fill opacity=0.78, line width=0.65pt},
		score box/.style={draw=gray!50!black, fill=white, rounded corners=1.5pt, line width=0.6pt, inner sep=2pt, font=\scriptsize, text=gray!25!black},
		step boundary/.style={draw=gray!45, line width=0.55pt, densely dashed},
		step dot/.style={
			circle,
			fill=dotcolor,
			draw=dotcolor,
			inner sep=0pt,
			minimum size=4.4pt
		},
		]
		
		\newcommand{\BrainImageOne}[3]{%
			\node[inner sep=0pt, opacity=0.6] at (#1,#2)
			{\includegraphics[width=#3]{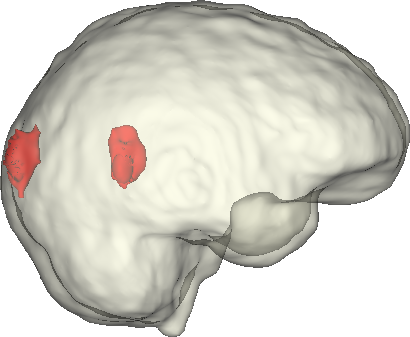}};
		}
		
		\newcommand{\BrainImageTwo}[3]{%
			\node[inner sep=0pt, opacity=0.6] at (#1,#2)
			{\includegraphics[width=#3]{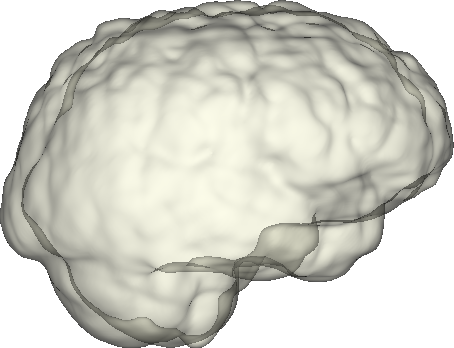}};
		}
		
		\newcommand{\BrainImageThree}[3]{%
			\node[inner sep=0pt, opacity=0.6] at (#1,#2)
			{\includegraphics[width=#3]{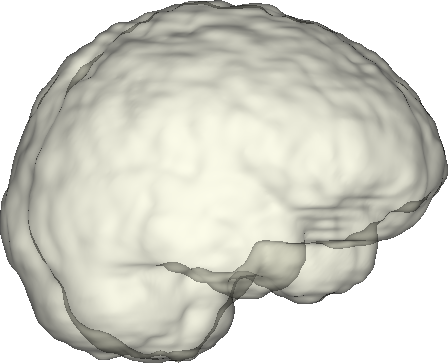}};
		}
		
		\newcommand{\TokenList}[3]{%
			\begin{scope}[shift={(#1,#2)}, scale=#3]
				\draw[bracket] (-0.15,0.16) -- (-0.15,0.22) -- (0.15,0.22) -- (0.15,0.16);
				\draw[bracket] (-0.15,-.88) -- (-0.15,-0.94) -- (0.15,-0.94) -- (0.15,-0.88);
				\node[token] at (0,0.08) {};
				\node[token] at (0,-0.20) {};
				\node[font=\scriptsize, inner sep=0pt, text=gray!45!black] at (0,-0.48) {$\vdots$};
				\node[token] at (0,-0.78) {};
			\end{scope}
		}
		
		\newcommand{\TokenListAlt}[3]{%
			\begin{scope}[shift={(#1,#2)}, scale=#3]
				\draw[bracket] (-0.15,0.16) -- (-0.15,0.22) -- (0.15,0.22) -- (0.15,0.16);
				\draw[bracket] (-0.15,-.88) -- (-0.15,-0.94) -- (0.15,-0.94) -- (0.15,-0.88);
				\node[token] at (0,0.08) {};
				\node[tokenB] at (0,-0.20) {};
				\node[font=\scriptsize, inner sep=0pt, text=gray!45!black] at (0,-0.48) {$\vdots$};
				\node[token] at (0,-0.78) {};
			\end{scope}
		}
		
		\newcommand{\DrawCube}[7]{%
			\begin{scope}[shift={(#1,#2)}, scale=#6]
				\coordinate (A) at (0,0);
				\coordinate (B) at (#3,0);
				\coordinate (C) at (#3,#4);
				\coordinate (D) at (0,#4);
				\coordinate (E) at (#5,#5*0.52);
				\coordinate (F) at (#3+#5,#5*0.52);
				\coordinate (G) at (#3+#5,#4+#5*0.52);
				\coordinate (H) at (#5,#4+#5*0.52);
				\path[fill=#7, draw=blue!80!black!70, line width=0.5pt, fill opacity=0.35]
				(A) -- (B) -- (C) -- (D) -- cycle;
				\path[fill=#7, draw=blue!80!black!70, line width=0.5pt, fill opacity=0.28]
				(D) -- (C) -- (G) -- (H) -- cycle;
				\path[fill=#7, draw=blue!80!black!70, line width=0.5pt, fill opacity=0.22]
				(B) -- (C) -- (G) -- (F) -- cycle;
				\draw[blue!80!black!70, line width=0.5pt]
				(E) -- (F) -- (G) -- (H) -- cycle
				(A) -- (B) -- (C) -- (D) -- cycle
				(A) -- (E)
				(B) -- (F)
				(C) -- (G)
				(D) -- (H);
			\end{scope}
		}
		
		\newcommand{\DrawWireCube}[6]{%
			\begin{scope}[shift={(#1,#2)}, scale=#6]
				\coordinate (A) at (0,0);
				\coordinate (B) at (#3,0);
				\coordinate (C) at (#3,#4);
				\coordinate (D) at (0,#4);
				\coordinate (E) at (#5,#5*0.52);
				\coordinate (F) at (#3+#5,#5*0.52);
				\coordinate (G) at (#3+#5,#4+#5*0.52);
				\coordinate (H) at (#5,#4+#5*0.52);
				\draw[black, line width=0.5pt]
				(E) -- (F) -- (G) -- (H) -- cycle
				(A) -- (B) -- (C) -- (D) -- cycle
				(A) -- (E)
				(B) -- (F)
				(C) -- (G)
				(D) -- (H);
			\end{scope}
		}
		
		\newcommand{\DrawFineSlab}[6]{%
			\begin{scope}[shift={(#1,#2)}, scale=#6]
				\coordinate (A) at (0,0);
				\coordinate (B) at (#3,0);
				\coordinate (C) at (#3,#4);
				\coordinate (D) at (0,#4);
				\coordinate (E) at (#5,#5*0.52);
				\coordinate (F) at (#3+#5,#5*0.52);
				\coordinate (G) at (#3+#5,#4+#5*0.52);
				\coordinate (H) at (#5,#4+#5*0.52);
				\path[fine] (A) -- (B) -- (C) -- (D) -- cycle;
				\path[fine] (D) -- (C) -- (G) -- (H) -- cycle;
				\path[fine] (B) -- (C) -- (G) -- (F) -- cycle;
				\draw[red!70!black!60, line width=0.4pt]
				(E) -- (F) -- (G) -- (H) -- cycle
				(A) -- (B) -- (C) -- (D) -- cycle
				(A) -- (E)
				(B) -- (F)
				(C) -- (G)
				(D) -- (H);
			\end{scope}
		}
		
		\newcommand{\TimelineRowFour}[8]{%
			\draw[draw=#8, line width=0.9pt] (#3,#1) -- (#6,#1);
			\foreach \xp/\idx in {#3/1,#4/2,#5/3,#6/4}{%
				\fill[#8] (\xp,#1) circle (2.5pt);
				\node[inner sep=0pt] (#7\idx) at (\xp,#1) {};
			}
			\node[axis name, text=#8!85!black] at ({#3-0.22},#1) {#2};
		}
		
		\coordinate (T1) at (0.8,0);
		\coordinate (T2) at (4.0,0);
		\coordinate (T3) at (7.2,0);
		\coordinate (T4) at (10.4,0);
		\coordinate (T5) at (13.6,0);
		\coordinate (TFused) at (13.0,0);

		\foreach \leftdot/\rightdot in {T1/T2,T2/T3,T3/T4,T4/TFused}{%
			\draw[step boundary] ($(\leftdot)!0.5!(\rightdot)+(0,0.18)$) -- ++(0,4.72);
		}
		
		\TimelineRowFour{4.5}{Sagittal}{0.8}{4.0}{7.2}{10.4}{sa}{sagittalcolor}
		\TimelineRowFour{3.5}{Coronal}{0.8}{4.0}{7.2}{10.4}{co}{coronalcolor}
		\draw[draw=sagittalcolor, line width=0.85pt] (10.4,4.5) .. controls (11.6,4.3) and (12.1,4.) .. (13.0,2.25);
		\draw[draw=coronalcolor, line width=0.85pt] (10.4,3.5) .. controls (11.7,3.4) and (12.0,3.3) .. (13.0,2.2);
		
		\BrainImageOne{0.}{2.00}{0.86cm}
		\BrainImageTwo{0.}{2.40}{0.86cm}
		\BrainImageThree{0.}{2.80}{0.86cm}
		\node[step label] at (0.,1.6) {B volumes};
		\draw[pipe arrow] (0.6,2.32) -- (1.35,2.32);
		\TokenList{1.55}{2.85}{0.74}
		\node[font=\scriptsize, inner sep=0pt, text=gray!45!black, rotate=-55] at (1.8,2.18) {$\cdots$};
		\TokenListAlt{2.1}{2.30}{0.74}
		
		\TokenList{3.1}{2.30}{0.76}
		\TokenListAlt{4.0}{2.78}{0.76}
		\TokenList{4.9}{1.77}{0.76}
		\draw[mutual score arrow] (3.3,2.28) -- (3.8,2.67);
		\draw[mutual score arrow] (3.3,1.80) -- (4.7,1.46);
		\draw[mutual score arrow] (4.2,2.60) -- (4.7,1.66);
		
		\DrawCube{6.0}{1.96}{0.62}{0.64}{0.30}{0.6}{blue!35}
		\DrawFineSlab{6.00}{2.00}{0.62}{0.10}{0.30}{0.6}
		\DrawWireCube{7.12}{1.58}{1.20}{1.22}{0.48}{0.90}
		\DrawCube{7.88}{2.20}{0.30}{0.32}{0.14}{0.80}{blue!40}
		\DrawCube{7.96}{1.58}{0.30}{0.32}{0.14}{0.80}{blue!40}
		\draw[score arrow] (6.59,2.12) -- (7.82,2.35);
		\draw[score arrow] (6.59,2.08) -- (7.90,1.75);
		
		\node[inner sep=0pt] at (10.48,2.26)
		{\includegraphics[width=1.85cm]{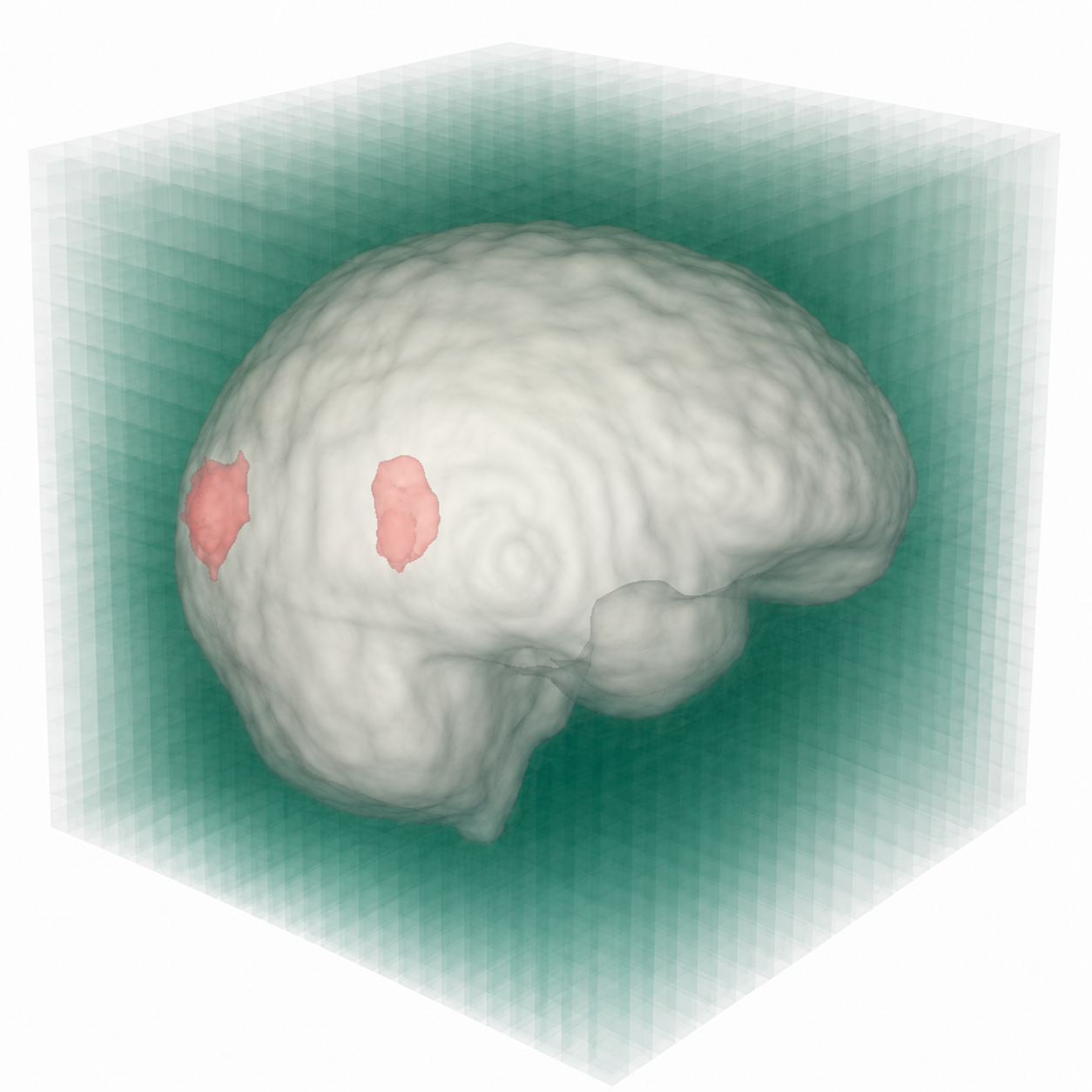}};
		
		\node[inner sep=0pt] at (14.52,2.58)
		{\includegraphics[width=2.25cm]{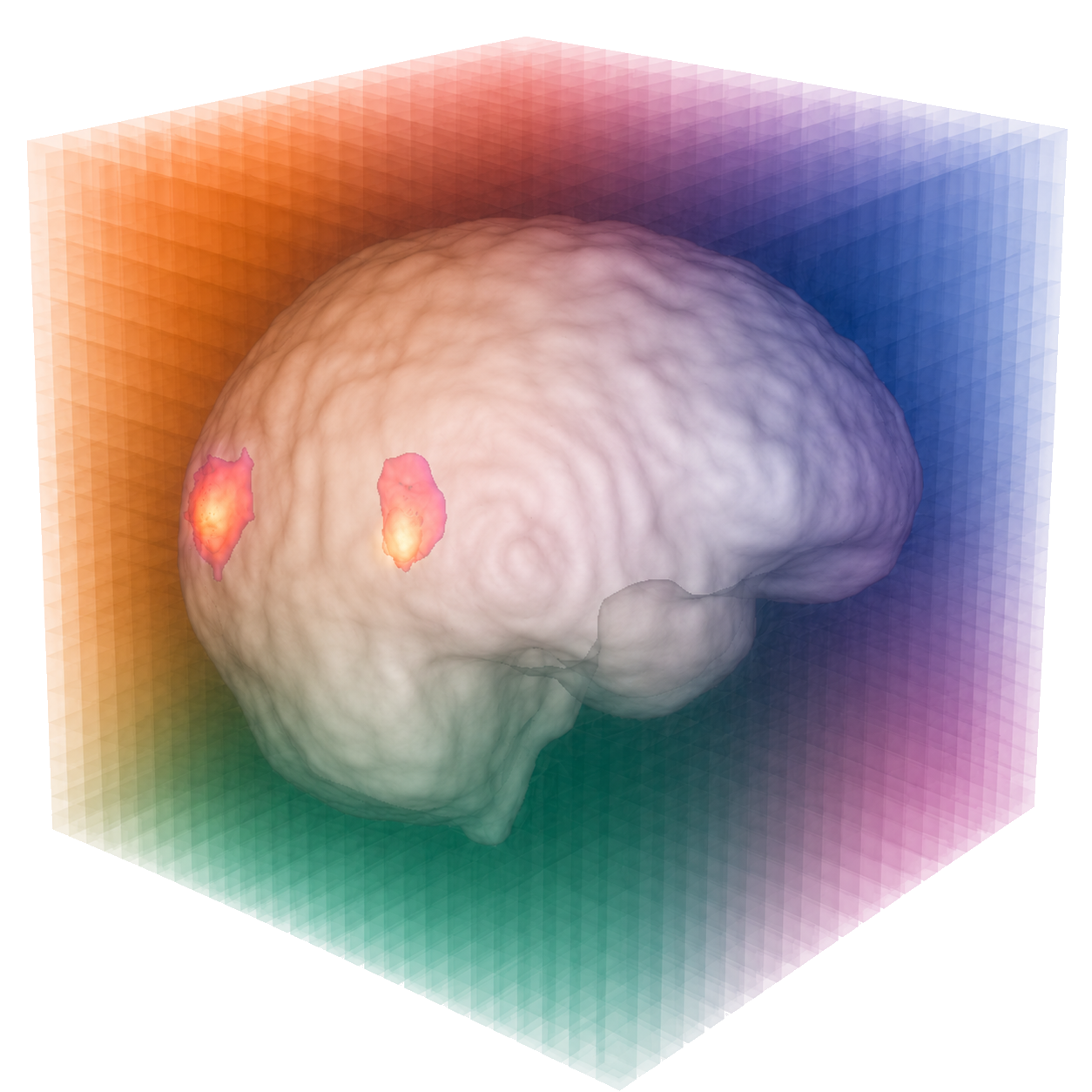}};
		
		\TimelineRowFour{0.85}{Axial}{0.8}{4.0}{7.2}{10.4}{ax}{axialcolor}
		\coordinate (fuseddot) at (13.0,2.2);
		\draw[draw=axialcolor, line width=0.85pt] (ax4) .. controls (11.5,1.0) and (12.1,1.2) .. (fuseddot);
		
			\node[step label] at (0.8,0.46) {Axis-wise \\Tokenization\\(Sec.~\ref{sec:tokenization})};
		\node[step label] at (4.0,0.46) {Cross-subject\\mutual similarity scoring\\ (Sec.~\ref{sec:scoring})};
			\node[step label] at (7.2,0.46) {Coarse-to-Fine \\routing \\(Sec.~\ref{sec:c2f})};
		\node[step label] at (10.4,0.46) {Axis-specific \\ anomaly maps};
		\node[step label] at (13.6,1.50) {Fused 3D \\ anomaly map};
		\node[step dot] at (fuseddot) {};
		
	\end{tikzpicture}
	\caption{Overview of CS3F. Each volume is processed independently along the sagittal, coronal, and axial axes. Slices are encoded by a frozen 2D foundation model and aggregated into volumetric tokens via depth pooling and random projection. The tokens are scored using cross-subject mutual similarity scoring. Axis-specific anomaly maps are fused to obtain the final voxel-level anomaly map.}
	\label{fig:cs3f_overall_pipeline}
\end{figure*}

%% file: tokenization.tex
\begin{figure}[!t]
	\centering
	\begin{tikzpicture}[
		scale=0.55,
		x={(1cm,0cm)},
		y={(0cm,1cm)},
		z={(0.55cm,0.33cm)},
		line join=round,
		line cap=round,
		>=Latex,
		volume edge/.style={black, line width=0.75pt},
		slice plane/.style={
			fill=blue!10,
			draw=blue!80!black!70,
			line width=1.1pt,
			fill opacity=0.45
		},
		token edge/.style={blue!80!black!70, line width=1.15pt},
		token plane/.style={
			fill=blue!35,
			draw=blue!80!black!70,
			line width=1.05pt,
			fill opacity=0.85,
			draw opacity=1
		},
		token plane dashed/.style={
			fill=blue!30,
			draw=blue!80!black!70,
			dashed,
			line width=1.05pt,
			fill opacity=0.85,
			draw opacity=1
		},
		token plane strong/.style={
			fill=blue!10!black,
			draw=blue!80!black!70,
			dashed,
			line width=1.25pt,
			fill opacity=0.45,
			draw opacity=1
		},
		note/.style={
			align=center,
			font=\small
		},
		axis arrow/.style={
			-{Latex[length=2.2mm]},
			black,
			line width=0.9pt
		},
		axis label/.style={
			font=\scriptsize,
			align=center,
			inner sep=1pt
		},
		step dot/.style={
			circle,
			fill=black,
			draw=black,
			inner sep=0pt,
			minimum size=4.4pt
		},
		arrow dot/.style={
			single arrow,
			single arrow head extend=3.pt,
			fill=black,
			draw=black,
			inner sep=0pt,
			minimum size=4.4pt
		},
		step label/.style={
			align=center,
			font=\small,
			text=gray!35!black
		},
		embedding bar/.style={
			rounded corners=3pt,
			draw=gray!55!black,
			line width=1.1pt,
			top color=blue!55,
			bottom color=blue!25,
			middle color=cyan!25,
			shading angle=90
		},
		short embedding bar/.style={
			rounded corners=3pt,
			draw=gray!55!black,
			line width=1.1pt,
			top color=blue!85!pink!30,
			bottom color=blue!55!pink!30,
			middle color=cyan!55!pink!30,
			shading angle=90
		},
		]
		
		\def\W{4.2}   
		\def\H{5.0}   
		\def\D{4.2}   
		
		\coordinate (V000) at (0,0,0);
		\coordinate (V100) at (\W,0,0);
		\coordinate (V110) at (\W,\H,0);
		\coordinate (V010) at (0,\H,0);
		
		\coordinate (V001) at (0,0,\D);
		\coordinate (V101) at (\W,0,\D);
		\coordinate (V111) at (\W,\H,\D);
		\coordinate (V011) at (0,\H,\D);
		
		\draw[volume edge] (V001) -- (V101) -- (V111) -- (V011) -- cycle;
		\draw[volume edge] (V000) -- (V001);
		\draw[volume edge] (V100) -- (V101);
		\draw[volume edge] (V110) -- (V111);
		\draw[volume edge] (V010) -- (V011);

		\begin{scope}
			\node[
				inner sep=0pt,
				opacity=0.8
			]
			at ($(V000)!0.5!(V111)$)
			{\includegraphics[width=2.25cm]{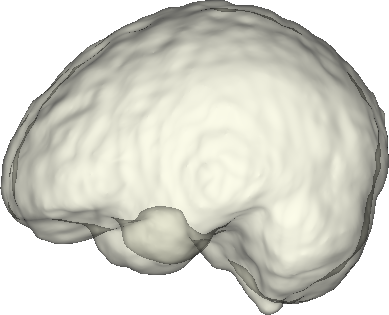}};
		\end{scope}
		
		\draw[volume edge] (V000) -- (V100) -- (V110) -- (V010) -- cycle;
		
		\def\Ys{0.9}
		
		\coordinate (S0) at (0,\Ys,0);
		\coordinate (S1) at (\W,\Ys,0);
		\coordinate (S2) at (\W,\Ys,\D);
		\coordinate (S3) at (0,\Ys,\D);
		
		\filldraw[slice plane] (S0) -- (S1) -- (S2) -- (S3) -- cycle;
		
		\node[note, blue!60!black, anchor=west, font=\scriptsize]
		at (0.,3.45,0.1)
		{Slice-wise ViT\\feature extraction};
		
		\draw[->, gray!35!black,, line width=0.8pt]
		(2.05,2.8,0.1)
		-- (2.,\Ys,1.35);
		
		\def\xo{3.1}
		\def\yo{0.}
		\def\zo{0.}
		\def\sx{1.1}
		\def\sy{1.3}
		\def\sz{1.1}
		
		\coordinate (P000) at (\xo,\yo,\zo);
		\coordinate (P100) at (\xo+\sx,\yo,\zo);
		\coordinate (P110) at (\xo+\sx,\yo+\sy,\zo);
		\coordinate (P010) at (\xo,\yo+\sy,\zo);
		
		\coordinate (P001) at (\xo,\yo,\zo+\sz);
		\coordinate (P101) at (\xo+\sx,\yo,\zo+\sz);
		\coordinate (P111) at (\xo+\sx,\yo+\sy,\zo+\sz);
		\coordinate (P011) at (\xo,\yo+\sy,\zo+\sz);
		
		\def\Ya{0.30}
		\def\Yb{0.60}
		\def\Yc{0.90}
		
		\filldraw[token plane dashed]
		(\xo,\Ya,\zo) --
		(\xo+\sx,\Ya,\zo) --
		(\xo+\sx,\Ya,\zo+\sz) --
		(\xo,\Ya,\zo+\sz) -- cycle;
		
		\filldraw[token plane dashed]
		(\xo,\Yb,\zo) --
		(\xo+\sx,\Yb,\zo) --
		(\xo+\sx,\Yb,\zo+\sz) --
		(\xo,\Yb,\zo+\sz) -- cycle;
		
		\draw[token plane strong]
		(\xo,\Yc,\zo) --
		(\xo+\sx,\Yc,\zo) --
		(\xo+\sx,\Yc,\zo+\sz) --
		(\xo,\Yc,\zo+\sz) -- cycle;
		
		\draw[token edge] (P001) -- (P101) -- (P111) -- (P011) -- cycle;
		\draw[token edge] (P000) -- (P100) -- (P110) -- (P010) -- cycle;
		\draw[token edge] (P000) -- (P001);
		\draw[token edge] (P100) -- (P101);
		\draw[token edge] (P110) -- (P111);
		\draw[token edge] (P010) -- (P011);
		
		\coordinate (TokenOut) at (\xo+\sx+2.,\yo+0.62,\zo+\sz+0.05);
		\coordinate (LongEmbedding) at (7.15,-0.55);
		\coordinate (ShortEmbedding) at (9.55,-0.18);
		
		\node[single arrow, draw=black, fill=white, 
		minimum width = 5pt, single arrow head extend=2pt,
		minimum height=10mm] at (TokenOut) {};
		
		\node[note, gray!35!black, anchor=south west, font=\scriptsize] at (\xo+\sx,\yo-2,\zo+\sz)
		{Average pooling\\ $\oplus$ \\ $\ell_2$ normalization};
		
		\path[embedding bar] (8.8,-1.) rectangle (9.8,4.);
		
		\node[
		rotate=-90,
		font=\scriptsize,
		text=gray!35!black
		] at (9.3,1.5) {Embedding};
		
		\coordinate (TokenOut) at (\xo+\sx+6.4,\yo+0.62,\zo+\sz+0.05);
		\coordinate (LongEmbedding) at (7.15,-0.55);
		\coordinate (ShortEmbedding) at (9.55,-0.18);
		
		\node[single arrow, draw=black, fill=white, 
		minimum width = 5pt, single arrow head extend=2pt,
		minimum height=10mm] at (TokenOut) {};
		
		\node[note, gray!35!black, anchor=south west, font=\scriptsize] at (\xo+\sx+5.3,\yo-2.,\zo+\sz)
		{Random \\ Projection\\(d=128)};
		
		\path[short embedding bar] (13.,-0.5) rectangle (14.,2.5);
		
		\node[
		rotate=-90,
		font=\scriptsize,
		text=gray!35!black
		] at (13.5,1.) {Embedding};
		
		\node[
		align=center,
		font=\small,
		text=gray!45!black
		] at (0.21,-0.75) {};

		\coordinate (StepOne) at (3.1,-1.55);
		\coordinate (StepTwo) at (9.3,-1.55);
		\coordinate (StepThree) at (13.0,-1.55);
		\coordinate (StepFour) at (13.5,-1.55);
		
		\draw[black, line width=0.85pt] (StepOne) -- (StepTwo) -- (StepThree);
		\node[step dot] at (StepOne) {};
		\node[step dot] at (StepTwo) {};
		\node[arrow dot] at (StepThree) {};
		\node[step dot] at (StepFour) {};
		
		\node[step label, anchor=north] at ($(StepOne)+(0,-0.25)$)
		{Slice-wise\\patch features};
		\node[step label, anchor=north] at ($(StepTwo)+(0,-0.25)$)
		{Volumetric\\tokens};
		\node[step label, anchor=north] at ($(StepFour)+(0,-0.25)$)
		{Projected\\tokens};
		
	\end{tikzpicture}
	\caption{
		Illustration of the axial volumetric tokenization pipeline. A 3D volume is decomposed into axial slices and encoded by a frozen vision transformer (DINOv2) to extract patch-level features. Patch features corresponding to the same spatial location across neighboring slices are aggregated via average pooling and \(\ell_2\)-normalized to form volumetric tokens. These tokens are subsequently projected into a compact feature space using a random projection matrix.
	}
	\label{fig:tokenization_tikz}
\end{figure}

%% file: c2f.tex
\begin{figure}[!t]
		\centering
		\begin{tikzpicture}[
			scale=0.55,
			x={(1cm,0cm)},
			y={(0cm,1cm)},
			z={(0.55cm,0.33cm)},
			line join=round,
			line cap=round,
			>=Latex,
			volume edge/.style={black, line width=0.75pt},
			coarse fill/.style={
				fill=blue!35,
				draw=blue!80!black!70,
				line width=1.1pt,
				fill opacity=0.35
			},
			coarse edge/.style={
				draw=blue!80!black!70,
				line width=1.1pt
			},
			fine slab/.style={
				fill=orange!30,
				draw=red!70!black!60,
				line width=0.9pt,
				fill opacity=0.75
			},
			route arrow/.style={
				-{Latex[length=2.5mm]},
				draw=red!70!black!60,
				line width=1.pt
			},
			note/.style={align=center},
			smallnote/.style={align=center, font=\scriptsize}
			]
			
			\newcommand{\DrawCuboid}[7]{%
				\coordinate (A) at (#1,#2,#3);
				\coordinate (B) at (#1+#4,#2,#3);
				\coordinate (C) at (#1+#4,#2+#5,#3);
				\coordinate (D) at (#1,#2+#5,#3);
				\coordinate (E) at (#1,#2,#3+#6);
				\coordinate (F) at (#1+#4,#2,#3+#6);
				\coordinate (G) at (#1+#4,#2+#5,#3+#6);
				\coordinate (H) at (#1,#2+#5,#3+#6);
				
				\draw[#7] (E)--(F)--(G)--(H)--cycle;
				\draw[#7] (D)--(H);
				\draw[#7] (C)--(G);
				\draw[#7] (A)--(E);
				
				\filldraw[#7] (A)--(B)--(C)--(D)--cycle; 
				\filldraw[#7] (A)--(B)--(F)--(E)--cycle; 
				\filldraw[#7] (B)--(C)--(G)--(F)--cycle; 
				\filldraw[#7] (C)--(D)--(H)--(G)--cycle; 
			}
			
			\def\W{4.2}
			\def\H{5.}
			\def\D{4.}
			
			\def\Qx{0}
			\def\Qy{0}
			\def\Qz{0}
			
			\coordinate (Q000) at (\Qx,\Qy,\Qz);
			\coordinate (Q100) at (\Qx+\W,\Qy,\Qz);
			\coordinate (Q110) at (\Qx+\W,\Qy+\H,\Qz);
			\coordinate (Q010) at (\Qx,\Qy+\H,\Qz);
			
			\coordinate (Q001) at (\Qx,\Qy,\Qz+\D);
			\coordinate (Q101) at (\Qx+\W,\Qy,\Qz+\D);
			\coordinate (Q111) at (\Qx+\W,\Qy+\H,\Qz+\D);
			\coordinate (Q011) at (\Qx,\Qy+\H,\Qz+\D);
			
			\draw[volume edge] (Q001)--(Q101)--(Q111)--(Q011)--cycle;
			\draw[volume edge] (Q000)--(Q001);
			\draw[volume edge] (Q100)--(Q101);
			\draw[volume edge] (Q110)--(Q111);
			\draw[volume edge] (Q010)--(Q011);
			\draw[volume edge] (Q000)--(Q100)--(Q110)--(Q010)--cycle;
			
				\begin{scope}
				\node[
				inner sep=0pt,
				opacity=0.8
				]
				at ($(Q000)!0.5!(Q111)$)
				{\includegraphics[width=2.25cm]{brain_mask_cropped.png}};
			\end{scope}
			
			\node[note, anchor=south] at ($(Q010)!0.5!(Q110)+(1,-7,0)$) {\textbf{Query volume}};
			
			\def\qcSx{1.}
			\def\qcSy{1.2}
			\def\qcSz{1.}
			
			\def\qcX{\Qx+\W-\qcSx}
			\def\qcY{\Qy}
			\def\qcZ{\Qz}
			
			\DrawCuboid{\qcX}{\qcY}{\qcZ}{\qcSx}{\qcSy}{\qcSz}{coarse fill}
			
			\node[smallnote, text=blue!60!black!80, anchor=north east]
			at (\qcX+\qcSx+1,\qcY-0.10,\qcZ+0.12) {coarse token};
			
			\pgfmathsetmacro{\slabH}{\qcSy/7}
			
			\DrawCuboid{\qcX}{\qcY+1*\slabH}{\qcZ}{\qcSx}{\slabH}{\qcSz}{fine slab};
			\draw[draw=blue!80!black!70,
			line width=1.1pt, fill opacity=0.35] (\qcX+\qcSx,\qcY+\qcSy,\qcZ)--(\qcX + \qcSx,\qcY,\qcZ);
			
			\draw[draw=red!70!black!60,
			line width=1.1pt, fill opacity=0.35] (\qcX+\qcSx,\qcY+2*\slabH,\qcZ)--(\qcX + \qcSx,\qcY+\slabH,\qcZ);
			
			\node[smallnote, text=red!70!black, anchor=west]
			at (\qcX-2.4,\qcY+\qcSy-0.8,\qcZ+0.2) {fine token};
			
			\coordinate (QCoarseCenter) at (\qcX+0.8*\qcSx,\qcY+0.3*\qcSy,\qcZ+0.5*\qcSz);
			
			\def\Rx{8.2}
			\def\Ry{0}
			\def\Rz{0}
			
			\coordinate (R000) at (\Rx,\Ry,\Rz);
			\coordinate (R100) at (\Rx+\W,\Ry,\Rz);
			\coordinate (R110) at (\Rx+\W,\Ry+\H,\Rz);
			\coordinate (R010) at (\Rx,\Ry+\H,\Rz);
			
			\coordinate (R001) at (\Rx,\Ry,\Rz+\D);
			\coordinate (R101) at (\Rx+\W,\Ry,\Rz+\D);
			\coordinate (R111) at (\Rx+\W,\Ry+\H,\Rz+\D);
			\coordinate (R011) at (\Rx,\Ry+\H,\Rz+\D);
			
			\draw[volume edge] (R001)--(R101)--(R111)--(R011)--cycle;
			\draw[volume edge] (R000)--(R001);
			\draw[volume edge] (R100)--(R101);
			\draw[volume edge] (R110)--(R111);
			\draw[volume edge] (R010)--(R011);
			\draw[volume edge] (R000)--(R100)--(R110)--(R010)--cycle;
			
			\node[note, anchor=south] at ($(R010)!0.5!(R110)+(1,-7,0)$) {\textbf{Reference volume}};
			
			\def\rOneX{8.0}
			\def\rOneY{0.70}
			\def\rOneZ{0.}
			\DrawCuboid{\rOneX+4.55}{\rOneY}{\rOneZ-2.1}{1.}{1.2}{1.}{coarse fill}
			\coordinate (RC1) at (\rOneX + 3.05,\rOneY,\rOneZ+0.575);
			
			\def\rTwoX{9.9}
			\def\rTwoY{3.30}
			\def\rTwoZ{1.35}
			\DrawCuboid{\rTwoX}{\rTwoY}{\rTwoZ}{1.}{1.2}{1.}{coarse fill}
			\coordinate (RC2) at (\rTwoX-0.5,\rTwoY,\rTwoZ+0.55);
			
			\def\rThrX{10.}
			\def\rThrY{1.00}
			\def\rThrZ{2.30}
			\DrawCuboid{\rThrX}{\rThrY}{\rThrZ}{1.00}{1.20}{1.00}{coarse fill}
			\coordinate (RC3) at (\rThrX-0.5,\rThrY,\rThrZ+0.50);
			
			\node[smallnote, text=blue!60!black!80, anchor=south west]
			at (\Rx+2.,2.5,\Rz+\D+0.15) {top-$L$ \\coarse tokens};
			
			\draw[route arrow] ($(QCoarseCenter) + (0.6, -0.1, 0)$) -- (RC1);
			\draw[route arrow] ($(QCoarseCenter) + (0.6, 0.1, 0)$) -- (RC2);
			\draw[route arrow] ($(QCoarseCenter) + (0.6, 0., 0)$) -- (RC3);
			
		\end{tikzpicture}
		\caption{
			Illustration of the coarse-to-fine routing mechanism. A coarse token from the query volume first identifies its top-$L$ nearest coarse tokens in each reference volume. Fine-scale matching is then performed only within the fine tokens belonging to these selected coarse regions, avoiding exhaustive search over the entire reference volume.
		}
		\label{fig:c2f_routing}
	\end{figure}
	